\documentclass{article}

\usepackage[preprint]{neurips_2026}

\usepackage[utf8]{inputenc} % allow utf-8 input
\usepackage[T1]{fontenc}    % use 8-bit T1 fonts
\usepackage{hyperref}       % hyperlinks
\usepackage{url}            % simple URL typesetting
\usepackage{booktabs}       % professional-quality tables
\usepackage{amsfonts}       % blackboard math symbols
\usepackage{nicefrac}       % compact symbols for 1/2, etc.
\usepackage{microtype}      % microtypography
\usepackage{xcolor}         % colors
\usepackage{amsmath,amssymb}
\usepackage{amsthm}
\usepackage{bbm}
\usepackage{graphicx}
\usepackage{algorithm}
\usepackage{algorithmicx}%
\usepackage{algpseudocode}%
\usepackage{mathtools}
\usepackage{enumitem}

\newtheorem{theorem}{Theorem}

\newtheorem{definition}{Definition}

\newcommand{\aud}{\texttt{aud}}
\newcommand{\model}{\texttt{mod}}
\newcommand{\Acc}{V}
\newcommand{\Stp}{S_t^{\pi}}
\newcommand{\hgamm}[1][t-1]{\hat{\gamma}_{#1}^{\model}}

\newcommand{\SPRT}{\text{LR}}
\newcommand{\UI}{\text{LR-UI}}

\newcommand{\SR}{\text{SR-}}

\title{Adaptive auditing of AI systems with anytime-valid guarantees}

\author{
Siyu Zhou$^{1*}$
Patrick Vossler$^{1*}$
Venkatesh Sivaraman$^1$
Yifan Mai$^2$
Jean Feng$^1$\\
$^1$University of California, San Francisco 
$^2$Stanford University
}

\begin{document}

\maketitle

\begin{abstract}
A major bottleneck in characterizing the failure modes of generative AI systems is the cost and time of annotation and evaluation.
Consequently, adaptive testing paradigms have gained popularity, where one opportunistically decides which cases and how many to annotate based on past results.
While this framework is highly practical, its extreme flexibility makes it difficult to draw statistically rigorous conclusions, as it violates classical assumptions: the number of observations is typically limited (often 10 to 50 cases) and decisions regarding sampling and stopping are made in the midst of data collection rather than based a pre-specified rule.
To characterize what statistical inferences can be drawn from highly adaptive audits, we introduce a hypothesis testing framework from two ``dueling'' perspectives: (i) the model's null that asserts there is no failure mode with performance below a target threshold versus (ii) the auditor's null that asserts they have a sampling strategy that will uncover a failure mode.
Leveraging Safe Anytime-Valid Inference (SAVI), we formalize the auditor as conducting ``testing by betting,'' which translates into simultaneous e-processes for testing the dueling null hypotheses.
Furthermore, \textit{if} the auditor is sufficiently powerful, we prove that these two hypotheses are asymptotically inverses of each other, in that passage of a stringent audit does in fact certify the AI system as being globally robust.
Empirically, we demonstrate that our proposed testing procedures maintain anytime-valid type-I error control, outperform pre-specified testing methods, and can reach statistically rigorous conclusions sometimes with as few as 20 observations.
\end{abstract}

\section{Introduction}

AI systems are known to exhibit various generalization failures across subgroups and input types, often described as the ``jagged frontier.''
Because gold-standard evaluations for AI (agent) outputs can often be time- and/or cost-consuming \citep{Bandel2026-dr}, a common practice is to construct small, adaptively-selected test suites to conduct more targeted probing of potential failure modes \citep{Husain2026-pt, Yan2022-hv}.
The practical appeal of adaptive testing is that one can decide in real-time which samples to annotate and how many based on results from past annotations; approaches include checklist-style behavioral tests that probe specific linguistic capabilities \citep{Ribeiro2020-dz}; human-in-the-loop systems where the practitioner interatively refines test cases alongside model-generated candidates \citep{Ribeiro2022-zu, Gao2023-oi};  visualization interfaces to audit algorithmic fairness \citep{Cabrera2019-yo};
and adversarial generation, where practitioners try to elicit worst-case behaviors via red-teaming \citep{Perez2022-qz, Ganguli2022-oo, Lee2023-vt, Li2024-qy}.
This practice even extends beyond AI, such as stress-testing devices across worst-case scenarios \citep{US-Food-and-Drug-Administration2020-lr}.

Despite its widespread use, the flexibility of adaptive testing workflows raises a fundamental statistical question: can a small number of adaptively chosen observations support rigorous inference about system robustness?
Standard statistical inference typically assume both a pre-specified sampling scheme and sample sizes that are large enough for asymptotic theory to apply.
Adaptive testing violates both: AI practitioners often curate as few as 10 to 50 test cases \citep{Husain2026-pt, Yan2022-hv}, with the sampling and stopping strategy decided in the midst of data collection rather than based on a pre-specified rule.
Such testing regimes are traditionally disallowed as they can substantially inflate type I error; for instance, if one continuously peeks at the p-value, the test is guaranteed to reject with probability one under the null \citep{Ramdas2025-uv}.
Even more fundamentally, it is difficult to formulate what exactly is being tested; for instance, arbitrary sampling practices cannot be used to test a model's average performance.
\textit{Thus, the goal of this work is to (i) understand what hypothesis tests can be assessed under highly flexible testing regimes and (ii) construct corresponding testing procedures that are universally valid with finite-sample guarantees.}

Our proposal begins with formalizing AI robustness auditing as conducting two ``dueling'' hypothesis tests, one from the perspective of the ``model'' and one from the perspective of the ``auditor'' (Figure~\ref{fig:main}(a)).
The model's null hypothesis makes the omnibus assertion that no failure modes exist, in that every sufficiently large subgroup meets a minimum performance threshold; rejection of the model's null means that the AI model is not robust.
The auditor's null hypothesis asserts that they have a strategy that will eventually identify a failure mode; rejection of the auditor's null implies passage of this specific audit.
While these two null hypotheses were specifically constructed to be testable in practice, they are generally not exact complements.
Thus, they cannot conduct the ideal audit that yields a binary conclusion of whether an AI system is or is not robust.
But \textit{if} one has a powerful auditor that is guaranteed to find a failure mode if one truly exists, we prove that the two null hypotheses \textit{are} complements and thus correspond to the ideal audit.
Through this formalization, we clarify what can be tested in practice, what assumptions are additionally needed to conduct the ideal audit in theory, and how one should interpret and conduct audits.

We then leverage Safe Anytime-Valid Inference (SAVI) methods to frame the auditor as conducting a ``Testing by Betting'' procedure, in which the auditor is allowed to make adaptive bets against any sequence of subgroups that the auditor suspects could be failure modes and where under the null, there is no winning betting strategy (Figure~\ref{fig:main} (b, c)) \citep{Grunwald2024-qs, Ramdas2025-uv}.
We investigate various mathematical translations of these bets into e-processes---the SAVI analog of the p-value in the sequential setting---which can be interpreted as the accumulation of wealth from the auditor's bets and, crucially, provide valid finite-sample Type-I error control under arbitrary adaptive sampling and optional stopping.
We begin with the simplest option of the likelihood ratio, then define more adaptive formulations based on universal inference, and finally introduce changepoint-based formulations.

We make the following contributions:
\begin{enumerate}[leftmargin=*,noitemsep]
    \item We formalize AI auditing as conducting dueling hypothesis tests between the model's null and the auditor's null. We prove these are asymptotically complements under sufficiently powerful auditing strategies.
    \item We introduce dual e-processes that provide stopping bounds for simultaneous failure mode detection and audit certification, with SAVI guarantees.
    \item We provide empirical results demonstrating how adaptive testing paired with the e-process procedures can indeed provide Type-I error control and make rigorous conclusions much faster than pre-specified auditing strategies, sometimes with as few as 20 observations.
\end{enumerate}

\begin{figure}
    \centering
    \includegraphics[width=\linewidth]{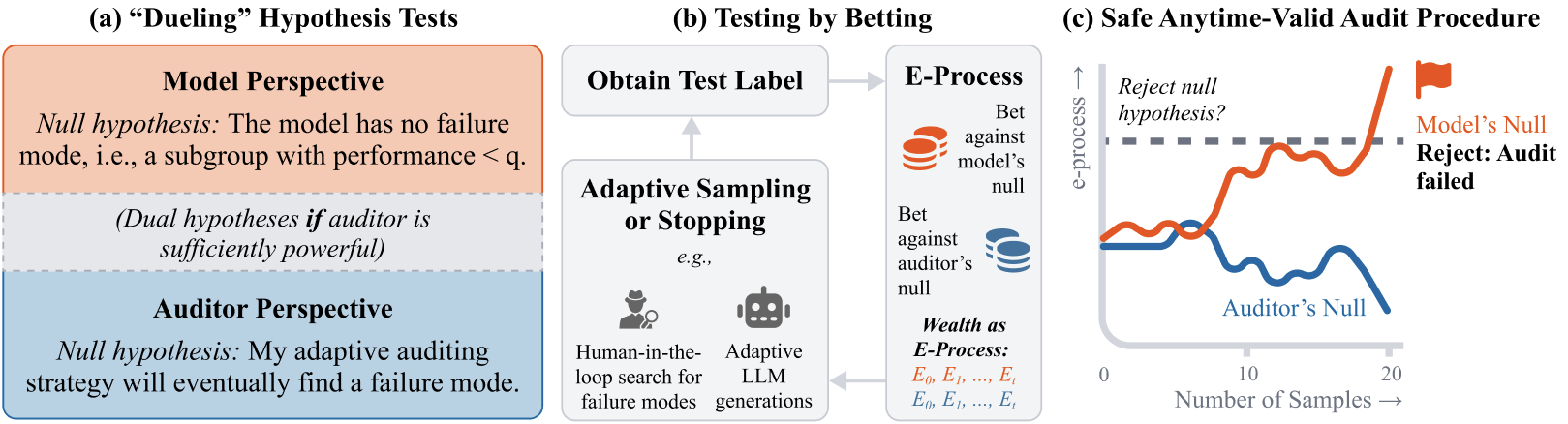}
    \caption{
    We (a) formalize flexible audits of failure modes in AI systems as dueling hypothesis tests, (b) introduce a ``testing-by-betting'' framework that is suitable for settings where sampling and stopping strategies are decided in the midst of data collection and not pre-defined, and (c) represent adaptive bets as dual e-processes with safe anytime-valid guarantees.
    % Under sufficiently powerful auditing strategies, the two hypotheses tests are asymptotically complements and the dual stopping bounds can be interpreted as certifying or rejecting AI system robustness.
    }
    \label{fig:main}
\end{figure}

\section{Related Work}

\textbf{LLM Evaluation and Adaptive Testing.}
Many free-form LLM outputs are expensive and time-consuming to evaluate, e.g., multiple subagent calls or need manual annotation by experts.
Consequently, beyond large-scale benchmarking efforts of LLMs \citep{Liang2023-ow}, a growing body of research focus on uncovering failure modes of LLMs through highly targeted adaptive or active testing, in which results obtained in previous rounds inform which observations to label next \citep{Ribeiro2022-zu, Gao2023-oi, Li2024-qy, Vivek2024-ww}.
These adaptive testing methods often involve human-in-the-loop, sometimes assisted by automatic systems; in the most extreme scenario, active testing is akin to red-teaming, where one constructs worst-case test scenarios \citep{Ganguli2022-oo}.
Due to their resource/cost-efficient nature, adaptive testing methods are also commonly used by AI practitioners to evaluate AI pipelines in target areas, where common recommendations are to assemble diverse test suites with 10--50 cases to cover edge cases and failure modes~\citep{Yan2025-rb, Husain2026-pt, Anthropic2026-ac}.
However, there is currently no statistical framework that formalizes what exactly is and can be the target of (statistical) inference  when one conducts highly adaptive and targeted testing with unknown sampling schemes, as this violates assumptions in classical active testing settings that require a pre-specified sampling scheme~\citep{Berrada2025-gr, Kuang2025-uj}.
This work provides such a framework by allowing for arbitrary sampling and stopping schemes. %, its major contributions are both (i) precisely defining the hypothesis tests that can be tested under such settings and (ii) introducing safe any-time valid testing procedures.

\textbf{E-values and E-processes.}
Because classical p-value and hypothesis testing procedures place rigid constraints and assumptions on the data generating mechanism, recent works have proposed an alternative approach based on e-values.
E-values, whose properties are derived from (conditional) expectations rather than probability distributions, have the advantage of providing anytime-valid testing with finite-sample Type I error control under any unknown sampling schemes~\citep{Grunwald2024-qs, Ramdas2025-uv}.
Furthermore, e-values have been extended to the online setting under optional stopping, with e-processes for sequential testing and e-detectors for sequential changepoint detection \citep{Shin2024-dj}.
These frameworks can be viewed as testing by betting, in which the tester places a bet at each iteration and, under the null, the expected return is (no greater than) zero \citep{Shafer2021-ni}.
While there is growing interest in using e-values to improve generated outputs from AI systems \citep{Sadhuka2025-sf}, there are current no e-value frameworks for highly-targeted and adaptive audits of AI systems, as the exact hypothesis tests in these settings have yet to be formalized.

\textbf{Subgroup and Fairness Testing.}
The problem of auditing failure modes of AI systems falls under the broader research topic of testing and identifying subgroups with large performance disparities \citep{Chung2019-tz, Eyuboglu2023-yn, Feng2023-ua, Singh2023-ln, Subbaswamy2024-wx, Zeng2026-qt} and algorithmic fairness~\citep{Chouldechova2020-ll, Mitchell2021-wq}.
Nevertheless, the existing literature either study the offline (batch) settings or active labelling with pre-specified sampling or subgroups  \citep{Yan2022-hv, Hartmann2026-hn}.
Our framework differs in its generality by allowing for statistically rigorous inference under active testing where neither the subgroups nor sampling strategy are pre-specified.

\section{Methods}
\label{sec:methods}

To formalize adaptive audits of AI system robustness, the following section addresses the two main technical challenges: (i) what null hypotheses can be feasibly tested and (ii) what testing procedures provide safe anytime-valid inference?
Due to space constraints, we refer the reader to the Appendix for implementation details and detailed proofs.

\textbf{Notation.}
Let $(\mathcal{X}, \mathcal{F}, \mu)$ denote a probability space with input space $\mathcal{X}$, $\sigma$-algebra $\mathcal{F}$, and probability measure $\mu$.
For query $X \in \mathcal{X}$, let random variable $Y$ denote the score for the AI system's response (e.g., correctness), which we assume to be IID conditional on $X$.
For a measurable subset $S \in \mathcal{F}$ with prevalence $\mu(S) > 0$, the AI system's average score for subgroup $S$ is denoted $\Acc(S) \coloneqq \mathbb{E}[Y \mid X \in S]$  (we presume higher scores are better).
For fixed $\epsilon \in (0,1]$, let $\mathcal{S}_{\epsilon} \coloneqq \{S \in \mathcal{F}: \mu(S) \geq \epsilon\}$ denote all subgroups with mass at least $\epsilon$.
A \emph{failure mode} is a subgroup $S \in \mathcal{S}_{\epsilon}$ on which $\Acc(S)$ falls below a target threshold $q$.
For any positive integer $r$, $[r]$ corresponds to the sequence $\{1, 2, \dots, r\}$.

\subsection{Dueling hypothesis tests for auditing AI robustness}

Defining a suitable hypothesis testing framework requires overcoming a fundamental asymmetry in adaptive model auditing: while identifying a failure mode proves a lack of robustness, passing a targeted test suite does not certify its robustness.
This is true from both the software engineering perspective of unit tests as well as the Fisherian perspective of statistical hypothesis testing \citep{Christensen2005-bp}.
Furthermore, because adaptive sampling opportunistically seeks out difficult cases, it cannot make conclusions about typical estimands like average or subgroup-specific performance.

We resolve this by framing AI robustness audits as conducting two ``dueling'' hypothesis tests:
the \emph{model's null} makes the omnibus assertion that no failure modes exist, while the \emph{auditor's null} asserts that their specific auditing strategy will eventually find a failure mode.
We then investigate conditions under which testing these two null hypotheses corresponds to the ideal audit, where passage of the audit occurs if and only if the AI system is globally robust.

\subsubsection{The Model's hypothesis}

The model's null hypothesis states that it performs well across all subgroups with prevalence at least $\epsilon$, while the alternative states that there exists one or more failure modes.
This can be mathematically formalized as
\begin{align}
\begin{split}
H_0^{\model}: & \quad \forall S \in \mathcal{S}_{\epsilon}, \quad \Acc(S) \geq q,\\
H_1^{\model}: & \quad \exists S \in \mathcal{S}_{\epsilon} \text{ such that } \Acc(S) < q.
\label{eq:mod_hypo}
\end{split}
\end{align}
We emphasize that this is a subgroup-uniform testing problem rather than an estimation problem: the procedure does not need to recover the worst-case subgroup nor its accuracy. 

The closest analog to this test appears in the \textit{multicalibration} literature \citep{Hebert-Johnson2018-vb,Feng2023-ua}, which studies whether a model is well-calibrated over a collection of subgroups.
Similar questions are also considered in the Distributionally Robust Optimization (DRO) literature where one may be interested in the worst-case subgroup performance \citep{Subbaswamy2021-np}.
The key difference in this work is that we study adaptive anytime testing, whereas prior works focused on the batch setting with prespecified sampling schemes.

\subsubsection{The Auditor's hypothesis}

The auditor's null hypothesis asserts that they have an unspecified, adaptive, and potentially adversarial auditing strategy that will eventually uncover a failure mode.
Formally, auditing strategy $\pi$ is defined as a function of the natural filtration $\mathcal{F}_t$ to the space of subgroups $\mathcal{S}_\epsilon$, where we use $\Stp$ to denote the adaptively selected subgroup at time $t$.
At each time $t$, the auditor bets against a subgroup $\Stp$, samples an observation $X_t$ from it (with respect to $\mu$ restricted to $\Stp$), and then obtains the corresponding score $Y_t$.
Formally, we represent the auditor's hypothesis test as
\begin{align}
\begin{split}
    H_0^{\aud, [m]}: &  \ \exists \tau \in [m], \ s.t. \ \forall t \geq \tau, \ \Acc(\Stp) < q,\\
    % H_1^{\aud,[m]}: & \not \exists \tau \in [m], \ s.t. \ \forall t \geq \tau, \ \Acc(\Stp) < q.
    H_1^{\aud,[m]}: & \forall \tau \in [m], \ \exists t \geq \tau, \ s.t.  \ \Acc(\Stp) \geq q.
\label{eq:aud_hypo}
\end{split}
\end{align}
Note that \eqref{eq:aud_hypo} is defined with respect to a hyperparameter $m$, which states that the auditor has a budget of size $m$ to find the failure mode.
This budget is necessary to ensure the null hypothesis is testable in practice (otherwise one would have to wait forever).
Nevertheless, neither data collection nor the auditor's adaptive sampling strategy need to be frozen at time $m$, nor should it be.
Instead, as discussed later, budget $m$ is simply the time at which we may begin seriously testing the auditor's null hypothesis.

\subsubsection{Duality of the model's and auditor's null}
\label{sec:duality}

In practice, one can only test the model's and auditor's null hypotheses.
Nevertheless, because the auditor's null only makes a statement about a specific audit strategy, it is natural to ask what conditions are needed to certify that an AI system is \textit{globally} robust.
We show that this is possible if the auditor's strategy is \emph{asymptotically consistent}, as defined below.
\begin{definition}[Asymptotic consistency]
    The auditor's strategy $\pi$ is asymptotically consistent if the performance of the selected subgroups converges to below $q$, i.e., $\lim_{t \rightarrow \infty} \Acc(S^{\pi}_t) < q$, whenever there exists a subgroup $S^* \in \mathcal{S}_{\epsilon}$ with performance below $q$.
    % If no such subgroup exists, the strategy only outputs subsets where $\Acc(S^{\pi}_t) \geq q$ for all $t$.
\end{definition}
Under the above assumption, we have the following duality result that states that the model's and auditor's null hypotheses are asymptotically complements.% (Fig~\ref{fig:duality}).
\begin{theorem}
\label{thm:duality}
Under an asymptotically consistent auditing strategy $\pi$, the model's and auditor's hypotheses are asymptotically complementary, i.e.,
    \begin{align*}
        \lim_{m \rightarrow \infty} H_1^{\aud, [m]} \text{ true} \iff H_0^{\model}\text{ true} \quad  and \quad
        \lim_{m \rightarrow \infty} H_0^{\aud, [m]} \text{ true} \iff H_1^{\model}\text{ true}.
    \end{align*}
    % \begin{align*}
    %     \mu\left( H_1^{\aud} \text{ is true}\right) = & \mu\left( H_0^{\model} \text{ is true}\right) \\
    %     \mu\left( H_0^{\aud} \text{ is true}\right) = & \mu\left( H_1^{\model} \text{ is true}\right).
    % \end{align*}
\end{theorem}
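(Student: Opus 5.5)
First we pin down what ``$\lim_{m\to\infty} H^{\aud,[m]}$ true'' means. Note that $H_0^{\aud,[m]}$ is monotone in $m$: if some $\tau\in[m]$ works, the same $\tau$ lies in $[m+1]$. So the events $\{H_0^{\aud,[m]}\text{ true}\}$ increase in $m$, and their limit is $H_0^{\aud,[\infty]}$: there exists a finite $\tau$ with $\Acc(\Stp)<q$ for all $t\ge\tau$. Dually, $H_1^{\aud,[m]}$ is the exact logical negation of $H_0^{\aud,[m]}$ for each $m$. Its events therefore decrease in $m$, and their limit is $H_1^{\aud,[\infty]}$: for every $\tau\in\mathbb{N}$ there is $t\ge\tau$ with $\Acc(\Stp)\ge q$, i.e.\ $\Acc(\Stp)\ge q$ infinitely often. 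Because $H_1^{\aud,[\infty]}$ is the complement of $H_0^{\aud,[\infty]}$, and $H_1^{\model}$ is the complement of $H_0^{\model}$, the two stated equivalences are equivalent to each other. It therefore suffices to prove the second one, $H_0^{\aud,[\infty]}\iff H_1^{\model}$.

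($\Leftarrow$) Suppose $H_1^{\model}$ holds, so some $S^*\in\mathcal{S}_\epsilon$ has $\Acc(S^*)<q$. By asymptotic consistency, $L\coloneqq\lim_{t\to\infty}\Acc(\Stp)<q$. Set $\delta=q-L>0$. By the definition of the limit there is $\tau$ with $|\Acc(\Stp)-L|<\delta$ for all $t\ge\tau$, which gives $\Acc(\Stp)<q$ for all $t\ge\tau$. This is exactly $H_0^{\aud,[m]}$ for every $m\ge\tau$, hence $H_0^{\aud,[\infty]}$. The strict inequality in the definition is what makes this step work: if we only knew the limit were $\le q$, the trajectory could approach $q$ from above.

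($\Rightarrow$) Suppose $H_0^{\aud,[\infty]}$ holds, so $\Acc(\Stp)<q$ for some $t$. By construction the strategy $\pi$ outputs subgroups in $\mathcal{S}_\epsilon$, so this $\Stp$ is itself a failure mode witnessing $H_1^{\model}$. By contraposition, $H_0^{\model}$ (no failure mode) implies $\Acc(\Stp)\ge q$ for all $t$, which is $H_1^{\aud,[\infty]}$. This direction does not use consistency at all; it only uses the fact that the auditor's bets are restricted to $\mathcal{S}_\epsilon$.

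The main obstacle is interpretive rather than technical. We have to fix the meaning of the limit of hypotheses, which we do via the monotone limits of the events above, and justify that $\lim_t\Acc(\Stp)$ exists. Asymptotic consistency should be read as asserting that this limit exists and is below $q$ under $H_1^{\model}$. If one only wants to assume a $\limsup$, the same argument goes through with $\limsup_t\Acc(\Stp)<q$. We will state these conventions explicitly before giving the two directions.
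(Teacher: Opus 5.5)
Your proposal is correct and follows the same route as the paper. It reduces by complementarity to $H_1^{\model} \iff H_0^{\aud,[\infty]}$, obtains ($\Leftarrow$) from asymptotic consistency, and obtains ($\Rightarrow$) by taking the selected subgroup $S_t^{\pi} \in \mathcal{S}_{\epsilon}$ itself as the failure-mode witness. Your write-up is slightly tighter than the paper's: the paper asserts that ``eventually $V(S_t^{\pi}) < q$'' is \emph{equivalent} to $\lim_{t\to\infty} V(S_t^{\pi}) < q$, but only the implication from the limit holds, and that is all either proof needs; you also make the monotone-limit reading of $\lim_{m\to\infty} H^{\aud,[m]}$ explicit.
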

Most standard nonparametric AI/ML models are known to be asymptotically consistent, e.g., neural networks, splines, tree-based models, etc \citep{Anthony1999-iv, Cox1984-db, Scornet2015-ma}.
Thus, Theorem~\ref{thm:duality} implies that if the auditor is strong enough, testing the two ``dueling'' hypotheses can in fact be interpreted as providing \textit{dual} stopping bounds for certifying or rejecting an AI system as being globally robust.
Such dual stopping bounds can also be viewed as a generalization of those used in clinical trials to declare efficacy versus futility of a treatment \citep{Pocock1977-hw, OBrien1979-wr}.
The key difference is our focus is an omnibus test with respect to all subgroups $\mathcal{S}_{\epsilon}$, whereas classical dual stopping bounds are solely concerned with the average treatment effect with respect to the full population.

Practically speaking, this result suggests leveraging both expert- and data-driven methods when auditing AI systems, as the former can provide useful prior knowledge in settings with limited data while the latter can ensure asymptotic consistency.
At the same time, this result warns against making robustness claims when the auditor is weak, echoing recent concerns that poorly designed red-teaming exercises may be akin to ``security theater'' \citep{Feffer2024-sp}.

\subsection{Dual safe anytime-valid testing procedures}
\label{sec:testing_procedure}

We now investigate procedures that translate fully adaptive sampling schemes and optional stopping into statistics for testing the model's and auditor's hypotheses while providing finite-sample guarantees.
Motivated by the ``Testing by Betting'' framework \citep{Shafer2021-ni}, we formulate testing procedures as e-processes \citep{Ramdas2025-uv}, a key mathematical object from the SAVI literature:
\begin{definition}[E-process]
A sequence $(E_t)_{t \geq 0}$ adapted to a filtration $(\mathcal{F}_t)$ is an \emph{e-process} with respect to $H_0$ if $\mathbb{E}_P[E_\tau] \leq 1$ for any stopping time $\tau$ and any $P \in H_0$. 
Equivalently, $(E_t)_{t\geq 0}$ is an e-process if there exists a test supermartingale family $(M_t^P)_{P \in H_0}$ in that, $\forall P \in H_0$,  (i) $M_t^P \geq 0$ $P$-almost surely, (ii) $M_t^P$ is a supermartingale under $P$, and (iii) $\mathbb{E}_P[M_0^P] \leq 1$, and that $E_t \leq M_t^P$ $P$-almost surely.
\end{definition}
From a testing perspective, the key property of e-processes is that they are guaranteed to control the Type I error rate under adaptive sampling and optional stopping by Ville's inequality.
That is, for any $P \in H_0$ and $\mathcal{F}$-stopping rule $\tau$, an e-process stopped at threshold $1/\alpha$ has a Type I error rate no more than $\alpha$, i.e.,
\begin{equation}
P\left(E_\tau \geq \frac{1}{\alpha}\right) \leq \alpha. \label{eq:Ville}
\end{equation}
As such, we next present various e-processes for testing the model's and auditor's hypotheses, beginning with the simplest option of the likelihood ratio and building up to more complex constructions.
For ease of exposition, this section focuses on binary scores $Y$ (e.g., accuracy) but the results extend to continuous or multiclass scoring.

\subsubsection{Testing the model's null}

To test the model's null hypothesis, we consider the likelihood ratio process, an adaptive version based on Universal Inference (UI) \citep{Wasserman2020-mn}, and extensions to both procedures by additionally integrating changepoint detection \citep{Shiryaev1963-xq, Roberts1966-yr, Shin2024-dj}. 

\textbf{Option 1: Likelihood ratio process (\texttt{LR}).}
One of the most well-known approaches to construct an e-process is to use likelihood ratios, which is based on the classical  Sequential Probability Ratio Test (SPRT) \citep{Wald1945-pi}.
In particular, let $p(Y;\gamma)$ be the likelihood of a Bernoulli distribution with probability $\gamma$, and define the \texttt{LR} test statistic as $E_t^{\SPRT, \model}:= E_{1:t}^{\SPRT, \model}$ where
\begin{align}
    E_{j:t}^{\SPRT, \model} = \prod_{k=j}^t \frac{p(Y_k; q-\delta)}{p(Y_k; q)}. \label{eq:M_LR_model}
\end{align}
Here, $\delta > 0$ is a fixed user-specified separation parameter.
When the auditor samples subgroups with $\Acc(S_t^\pi) \leq q-\delta$, the process has positive expected log-growth against the boundary value $q$. 
Subgroups with $\Acc(S_t^\pi) \in (q-\delta,q)$ remain failures under $H_1^{\model}$ but may produce slow growth, if any.
Under the null $H_0^{\model}$, the test statistic $E_t^{\SPRT, \model}$ is an e-process, with no expected growth.
Therefore rejecting $H_0^{\model} $ at time $t$ if $E_t^{\SPRT, \model} \geq \frac{1}{\alpha}$ is an any-time valid level-$\alpha$ sequential test.

\textbf{Option 2: Likelihood Ratios with UI (\texttt{LR-UI}).}
While simple to construct, the disadvantage of the \texttt{LR} procedure above is that the fixed probability $q -\delta$ in \eqref{eq:M_LR_model} can be suboptimal for detecting a particular alternative in $H_1^{\model}$. 
To improve its power, we introduce an adaptive version denoted \texttt{LR-UI} that draws on UI to define the test statistic $\quad E_t^{\UI, \model} := E_{1:t}^{\UI, \model}$, where
\begin{align}
    E_{j:t}^{\UI, \model} = \prod_{k = j}^t\frac{p\left(Y_k; \hat{\gamma}_{k-1}^{\model}\right) }{p\left(Y_k; q\right)}. \label{eq:M_LR_a_model}
\end{align}
Here $\hat{\gamma}_{t-1}^{\model}$ is any estimator of $V(X_t)$ with respect to the natural filtration $\mathcal{F}_{t-1}$ that outputs a probability consistent with $H_1^{\model}$.
By using an estimator, one can view the auditor as placing an adaptive bet on not only which subgroup is a potential failure mode, but also its likely failure rate, in the hopes to further optimize its gain in the testing-by-betting framework.
Because $Y_t$ and $\hat{\gamma}_{t-1}^{\model}$ are conditionally independent given $\mathcal{F}_{t-1}$, $E_t^{\UI, \model}$ is a test supermartingale and thus an e-process for $H_0^{\model} $.

\textbf{Options 3 and 4: SR Likelihood ratio processes (\texttt{SR-LR}, \texttt{SR-LR-UI}).}
\texttt{LR} and \texttt{LR-UI} are optimal when the auditor identifies failure modes from the very beginning, but an auditor may require multiple attempts (e.g., a learning period) in practice.
Consequently, we consider integrating changepoint detection techniques into the aforementioned e-processes, so that the testing procedures are powered to detect the emergence of failure modes in the adaptive testing sequence.

In particular, we will consider an extension based on the Shirayaev-Roberts (SR) procedure \citep{Shiryaev1963-xq, Roberts1966-yr}, which is classically defined as the sum of LR processes starting from all possible candidate changepoints and has been more recently studied under the umbrella of e-detectors \citep{Shin2024-dj}.
Because the weighted sum of e-processes is also an e-process, we define the test statistics \texttt{SR-LR} and \texttt{SR-LR-UI} as the e-processes
\begin{align*}
    E_t^{\SR\SPRT, \model} = \sum_{j=1}^{t} w_j E_{j:t}^{\SPRT, \model}, \quad
    % E_t^{\SR\Mix, \model} = \sum_{j=1}^{\min(t, m)} w_j E_{j:t}^{\Mix, \model}, \\
    E_t^{\SR\UI, \model} = \sum_{j=1}^{t} w_j E_{j:t}^{\UI, \model}.
\end{align*}
for pre-defined weights $\{w_j : j=1,2,\cdots\}$ that sum to one.
In the betting framework, the SR extension equates to the auditor distributing their wagers across all candidate changepoints to hedge against the unknown delay before a failure mode emerges.

\subsubsection{Testing the auditor's null}

To test the auditor's null hypothesis, we cannot directly apply the aforementioned solutions because the auditor's null has a complex composite structure whereas the model's null is much simpler.
The challenge is most apparent when viewing the auditor’s null as a changepoint statement. 
In this formulation, the null asserts the presence of a drop in the subgroup scores - a change point - \emph{within} the allotted budget $m$, whereas the alternative asserts the absence of such a drop.
Rejecting the null prior to observing \emph{all} $m$ observations is therefore risky, since doing so entails an unwarranted extrapolation about the future of an auditor adaptive to the \emph{full} past trajectory.

As such, one may ask if it is even possible to meaningfully test the auditor's null prior to time $m$.
We prove that this is indeed impossible, in that there is no anytime-valid testing procedure that can achieve higher power than its Type I error prior to the $m$-th observation.
\begin{theorem}
    Let $\phi(t)$ be any any-time valid testing procedure for the auditor's null hypotheses such that the Type I error rate is controlled at $\alpha$, 
    i.e., $\sup_{m' \in [m], t < m'} \mathbb{E}\left (\phi(t)|H_0^{\aud,m'} \text{ is true} \right) \leq \alpha $ where $H_0^{\aud,m'}$ is the singleton analog of the auditor's null hypothesis. 
    Then, the power prior to time $m$ is also no more than $\alpha$, i.e., $\sup_{m' \in [m], t < m'} \mathbb{E}\left(\phi(t) | H_1^{\aud,m'} \text{ is true}\right) \leq \alpha$.
    \label{thm:auditor_start}
\end{theorem}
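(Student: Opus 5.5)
The plan is an indistinguishability (coupling) argument. A testing procedure $\phi(t)$ is adapted to the natural filtration, so $\phi(t)$ is a measurable function of the first $t$ observations $(S_1^\pi, X_1, Y_1, \dots, S_t^\pi, X_t, Y_t)$ and possibly independent external randomization. The auditor's null $H_0^{\aud,m'}$ and alternative $H_1^{\aud,m'}$ differ only in what happens from the changepoint $\tau \le m'$ onward. Here I read the singleton analog $H_0^{\aud,m'}$ as asserting that the drop occurs exactly at time $m'$, i.e.\ $\Acc(\Stp) < q$ for all $t \ge m'$. Hence for $t < m'$, the joint law of the first $t$ observations under any alternative should be matched by the law under some null. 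Then $\mathbb{E}(\phi(t) \mid H_1^{\aud,m'})$ equals $\mathbb{E}(\phi(t) \mid H_0^{\aud,m'})$ for a suitably chosen null, which is at most $\alpha$ by the assumed Type I control. Taking suprema over $m' \in [m]$ and $t < m'$ gives the claim.

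The key steps, in order, are as follows.

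\begin{enumerate}
\item Fix $m' \in [m]$, $t < m'$, and an alternative data-generating configuration $P_1 \in H_1^{\aud,m'}$. This consists of the model's conditional score distribution together with an auditing strategy $\pi$ whose selected subgroups satisfy $\Acc(\Stp) \ge q$ infinitely often after every $\tau \le m'$.

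\item Construct a null configuration $P_0 \in H_0^{\aud,m'}$ by modifying only the future. Keep $\pi$ and the score distributions identical up to time $m'-1$. For times $s \ge m'$, have the auditor select a subgroup with $\Acc < q$. If the model is robust, so that no such subgroup exists, instead alter the conditional law of $Y$ on a region not yet sampled, or simply let the null be defined through the per-step conditional means $\Acc(\Stp)$ of the observation process. Since the hypotheses are statements about the sequence of conditional means $\Acc(\Stp)$, the cleanest route is to treat the null and alternative as sets of laws on the observation sequence, specified by the predictable conditional means. Then $P_0$ can be taken to agree with $P_1$ on $\mathcal{F}_{m'-1}$, and to have conditional means below $q$ from time $m'$ on.

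\item Since $P_0|_{\mathcal{F}_t} = P_1|_{\mathcal{F}_t}$ for $t \le m'-1$ and $\phi(t)$ is $\mathcal{F}_t$-measurable, conclude $\mathbb{E}_{P_1}\phi(t) = \mathbb{E}_{P_0}\phi(t) \le \alpha$.
\end{enumerate}

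The main obstacle is step 2: justifying that the null class is rich enough to contain a law matching the alternative's past exactly. This is an issue because the hypotheses are stated in terms of the fixed model $V$ and the subgroups $\Stp$, not abstract laws. I would handle it by making explicit that the auditing strategy's post-$m'$ behavior is unrestricted by the pre-$m'$ data: an adaptive auditor may switch to any subgroup in $\mathcal{S}_\epsilon$. I would also note that the composite null is over all models/strategies consistent with the stated condition. If a failure-mode subgroup must exist for the null to be nonempty, the theorem is vacuous otherwise, so I would restrict attention to the case $H_0^{\aud,m'}$ is nonempty and argue that the switch can be performed at time $m'$ without changing the law before it.
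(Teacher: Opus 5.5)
Your argument is the paper's: it also fixes an alternative subgroup sequence and builds a null sequence $S^{0,\pi}$ that coincides with it for $t<m'$ and satisfies $H_0^{\aud,m'}$ afterwards, then concludes that $\phi(t)$ for $t<m'$ has the same expectation under both (the paper phrases this by contradiction rather than directly). The paper takes the existence of such a null continuation for granted, so your reading of the hypotheses as constraints on the post-$m'$ conditional means is the right way to make it rigorous; however, drop the fallback of altering the law of $Y$ on an ``unsampled region'', since with adaptively chosen subgroups of mass at least $\epsilon$ this generally changes the pre-$m'$ law, and note that for a fixed robust model the null is empty, so the conclusion would fail rather than hold vacuously---an implicit scope restriction the paper shares.
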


Given this result, we can simplify the testing procedure to wait until the end of the budget window $m$.
Then starting at time $m$, we can test the simplified null hypothesis
\begin{align}
\begin{split}
    H_0^{\aud, m}:& \ \forall t \geq m, \ \Acc(\Stp) < q, \\
    H_1^{\aud, m}:& \ \exists t \geq m, \ s.t. \ \Acc(\Stp) \geq q,
    \label{eq:simple_aud}
\end{split}
\end{align}
for which the testing procedures from the previous section are now applicable.

\textbf{Option 1. Likelihood Ratio Process (\texttt{LR}).}
The first option is to define a similar likelihood ratio process as \eqref{eq:M_LR_model}, but starting from time $m$ and with the larger probability parameter in the numerator.
The \texttt{LR} test statistic for the auditor's null is thus
\begin{align}
    E_{t}^{\SPRT, \aud} = 
    \begin{cases}
        \prod_{k = m}^t \frac{p(Y_k; q+\delta')}{p(Y_k; q)}, & \text{ if } t \geq m, \\
        1, & \text{ otherwise,}
    \end{cases}
    \label{eq:M_LR_auditor}
\end{align}
for some fixed tolerance $\delta' > 0$.

\textbf{Option 2. Likelihood Ratios with UI (\texttt{LR-UI}).}
Alternatively, we can again use an adaptive version of the likelihood ratio process to avoid specifying a fixed alternative.
That is, the \texttt{LR-UI} test statistic for the auditor's null is
\begin{align}
    E_{t}^{\UI, \aud} = 
    \begin{cases}
        \prod_{k = m}^t\frac{p\left(Y_k; \hat{\gamma}_{k-1}^{\aud}\right) }{p\left(Y_k; q \right)} , & \text{ if } t \geq m, \\
        1, & \text{ otherwise,}
    \end{cases}
\end{align}
where $\hat{\gamma}_{t-1}^{\aud}$ is an estimator of $\Acc(X_t)$ with respect to the natural filtration $\mathcal{F}_{t-1}$ under $H_1^{\aud, m}$.

\subsubsection{Dual testing}

Combining the e-processes for the model's and auditor's nulls yields dual safe anytime-valid procedures for any adaptive auditor (Figure~\ref{fig:main}; Algorithm~\ref{algo:dual_testing}).
At each round $t$, the model's test statistic is updated; once $t \geq m$, the auditor's test statistic is monitored alongside it.
The procedure stops as soon as either statistic exceeds $1/\alpha$.
Because at most one of the two nulls is true under any data distribution, Bonferroni correction is unnecessary and the family-wise error rate (FWER) is equal to the Type I error of the individual tests, as formalized below.
\begin{theorem}
\label{thm:fwer}
    For any auditor strategy $\pi$, the dual testing procedure with stopping thresholds $1/\alpha$ provides safe any-time valid control of the FWER at level $\alpha$. 
    % $P\big(\text{procedure rejects any true null in } {H_0^{\model}, H_0^{\aud, m}}\big) \le \alpha,$
    % over data-generating distributions $P$.
\end{theorem}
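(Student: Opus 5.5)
The plan is to reduce the FWER statement to the individual anytime-valid guarantees, via a case split on which null (if any) holds under the data-generating distribution $P$ induced by the model and the auditor strategy $\pi$.

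Fix $P$ and let $\tau$ be the (possibly infinite) stopping time at which the dual procedure halts. The procedure halts at the first $t$ with $E_t^{\model} \geq 1/\alpha$ or $E_t^{\aud} \geq 1/\alpha$, where $E_t^{\aud}\equiv 1$ for $t<m$. Both e-processes are adapted to the natural filtration $(\mathcal{F}_t)$, so $\tau$ is an $\mathcal{F}$-stopping time. The same holds for any additional optional stopping the practitioner imposes.

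The first step is to show that $H_0^{\model}$ and $H_0^{\aud,m}$ cannot both hold under $P$. Suppose $H_0^{\aud,m}$ holds. Then $\Acc(S_m^\pi)<q$ with $S_m^\pi\in\mathcal{S}_\epsilon$, so $H_1^{\model}$ holds. Hence at most one null is true, and there are three cases.

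\emph{Case (a): $H_0^{\model}$ true.} Only the model's null can be falsely rejected, and a false rejection requires $E_\tau^{\model}\geq 1/\alpha$. We must check that each proposed $E^{\model}$ (\texttt{LR}, \texttt{LR-UI}, and their SR versions) is an e-process under every $P\in H_0^{\model}$ and any adaptive $\pi$. Conditionally on $\mathcal{F}_{t-1}$, $Y_t\sim\mathrm{Bern}(\Acc(\Stp))$ with $\Acc(\Stp)\geq q$. For any $\mathcal{F}_{t-1}$-measurable $\gamma\leq q$,
\[
\mathbb{E}\left[\frac{p(Y_t;\gamma)}{p(Y_t;q)}\,\middle|\,\mathcal{F}_{t-1}\right]=\Acc(\Stp)\frac{\gamma}{q}+(1-\Acc(\Stp))\frac{1-\gamma}{1-q}\leq 1 .
\]
This holds because the expression is affine in $\Acc(\Stp)$ with nonpositive slope and equals $1$ at $\Acc(\Stp)=q$. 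Each product is therefore a test supermartingale. The SR sums are nonnegative convex combinations of processes started at $j$, whose partial sums form a supermartingale. Ville's inequality \eqref{eq:Ville} then gives FWER $\leq\alpha$.

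\emph{Case (b): $H_0^{\aud,m}$ true.} The argument is symmetric. For $t<m$ the process $E_t^{\aud}=1<1/\alpha$ (assuming $\alpha<1$), so no rejection occurs before $m$. For $t\geq m$, each factor with numerator parameter $\gamma\geq q$ has conditional mean at most $1$, since $\Acc(\Stp)<q$ and the slope is now nonnegative. So $E^{\aud}$ is a test supermartingale, and Ville's inequality bounds $P(E_\tau^{\aud}\geq1/\alpha)\leq\alpha$.

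\emph{Case (c): neither null is true.} No false rejection is possible, so the FWER is $0$.

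Taking the maximum over the cases gives FWER $\leq\alpha$ uniformly over $P$ and $\pi$, with no Bonferroni correction needed.

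The main obstacle is the supermartingale verification under arbitrary adaptive $\pi$ and composite nulls. The key points are that the estimators $\hat\gamma$ must be predictable and restricted to the correct side of $q$ (which the definitions of \texttt{LR-UI} impose via ``consistent with $H_1$''), and that the null constrains $\Acc(\Stp)$ at every step, so the conditional-mean bound holds pathwise rather than only on average.
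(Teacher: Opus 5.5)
Your proposal is correct and follows the paper's own argument: at most one of $H_0^{\model}$ and $H_0^{\aud,m}$ can hold, so FWER control reduces to the two individual level-$\alpha$ guarantees. Beyond what the paper writes down, you justify mutual exclusivity via $S_m^\pi\in\mathcal{S}_\epsilon$ and check that each statistic is an e-process under its null. One small wording fix: the truncated SR sum $\sum_{j\le t} w_j E_{j:t}$ is not itself a supermartingale, since its conditional mean can rise by $w_{t+1}$; it is, however, dominated by the test supermartingale $\sum_{j\ge 1} w_j E_{j:t}$ with $E_{j:t}:=1$ for $t<j$, and that domination is all Ville's inequality needs.
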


The main hyperparameter of the dual testing procedure is the choice of the auditor's budget $m$, where a higher value means a more stringent audit is conducted.
This should be selected based on prior knowledge and real-world constraints.
One should generally choose a higher $m$ to mitigate failure modes in high-risk settings or find failure modes with lower prevalence.
If the goal is to ensure a typical user is unlikely to encounter a failure mode, $m$ can also be chosen to approximate how many test cases a typical user may conduct to assess the robustness of an AI system.
That said, our ablation study (in the Appendix) show that the procedure is relatively robust to the choice of $m$.

\section{Experiments}
\label{sec:experiments}

We evaluate the dual anytime-valid testing framework in two experimental settings.
Experiment~1 uses semi-synthetic data to comprehensively test a variety of settings that vary the size and magnitude of the failure mode.
Experiment~2 evaluates the practical utility of the methods on a real-world LLM pipeline for clinical note analysis.
Failure modes are defined as poor-performing subgroups with prevalence $\epsilon$ at least $0.05$.
All hypothesis tests were conducted at level $\alpha = 0.05$.

We test several auditing strategies and testing procedures, with the primary goal of comparing the power of the testing procedures given the same auditing strategy.
First, we apply the simplest \texttt{LR} e-process formulation with three different fixed auditing strategies: \texttt{Stratified} (uniform sampling across known subgroups), \texttt{Pre-learned} (pre-learn a model for sampling, based on the batch approach as in \citep{Feng2023-ua}), and \texttt{Oracle} (perfect subgroup knowledge, which upper bounds performance).
Next, we test the four e-processes from Section~\ref{sec:testing_procedure} (\texttt{LR}, \texttt{LR-UI}, \texttt{SR-LR}, and \texttt{SR-LR-UI}) with an adaptive auditing strategy based on online learning.
This adaptive auditor converts LLM queries ($X_t$) into embeddings and retrains an NN ensemble with each additional annotation.
Based on the predicted scores, the auditor samples from the $\epsilon$-sized subgroup with the lowest predicted scores below the threshold $q$, if such a subgroup exists; otherwise, based on the Upper Confidence Bound (UCB) algorithm \citep{Auer2002-rl}, it samples the bottom half of the population with the lowest confidence bounds for the estimated performance.
The adaptive alternative probabilities in \texttt{LR-UI} and \texttt{SR-LR-UI} are constructed using the Exponentially Weighted Averaging Forecaster (EWAF) algorithm \citep{Cesa-Bianchi2006-tl}.
In the Appendix, we provide full experimental details, demonstrate that the test procedures are robust to the choice of budget $m$ in ablation studies, and show results with similar trends on the CUB-200-2011 image dataset \citep{Wah2011-kc}.

\subsection{Experiment 1: Detecting failure modes with semi-synthetic data}

To comprehensively evaluate testing methods across a variety of conditions, we simulated an AI pipeline with varying degrees of degradations in the GPQA dataset \citep{Rein2023-dv}.
Following the available strata of science domains (Biology, Chemistry, Physics) and education levels (Undergraduate, Graduate, Post-graduate), we simulated a globally robust model with no failure modes (\emph{None}) and models with a failure mode in the Chemistry domain, particularly for higher education levels (\emph{Small}, \emph{Medium} and \emph{Large degradation}).
We conduct experiments across two settings commonly found in practice: (a) the auditor has access to an unlabelled dataset from which it can actively choose observations to label, and (b) the auditor has no pre-existing data and must actively generate test cases. 
We simulate the latter by generating data from an LLM, and then use the auditor's accuracy model to prioritize which observations to annotate.
We evaluate each method over 100 independent trials, measuring rejection rates of $H_0^{\model}$ (power to detect failure modes) and $H_0^{\aud}$ (audit passage), as well as the number of observations required to reach a decision.

Among the adaptive testing methods, \texttt{SR-LR-UI} achieved the highest power for detecting failure modes (Fig~\ref{fig:exp1}).
UI variants (\texttt{SR-LR-UI}, \texttt{LR-UI}) generally outperformed their fixed-probability counterparts (\texttt{SR-LR}, \texttt{LR}), as using a probability estimator can further optimize the auditor's bets and thus power for detecting failure modes.
Integrating changepoint detection also improves power for UI procedures (i.e., \texttt{SR-LR-UI} $>$ \texttt{LR-UI}), though it had a minimal effect for  fixed-probability methods.
The reason is that in the setting where the auditor can adaptively place bets against the failure rate, SR-LR-UI can tune the alternative probability to match the failure mode identified by the auditor after some initial learning period, while LR-UI is constrained to finding an alternative probability that works well from the very beginning.
In contrast, in the fixed-probability setting, the two methods are more similar because the alternative probabilities are fixed to be the same, leaving little room for optimizing power.

As expected, adaptive testing strategies substantially outperformed the pre-specified auditing approaches of \texttt{Stratified} and \texttt{Pre-learned} sampling.
Furthermore, because the pre-specified auditing strategies have low power to uncover failure modes, their tests frequently reject the auditor's null.
While this is mathematically correct since the AI system has passed these specific audits, it risks being misinterpreted as indicating robustness of the AI system.
Finally, the \texttt{Oracle} auditor outperforms all the existing methods since it knows the true failure mode, showing that rejection can be as fast as 15--25 observations.
This show that if one has an even smarter auditing strategy, such as through prior knowledge, one may be able to complete the statistical test with even smaller test suites.

When the model's null is true, every method correctly rejects $H_0^{\aud}$ at high rates, certifying that the audit has passed.
This confirms that Type~I error for $H_0^{\model}$ is controlled: no method falsely reports a failure mode when none exists.
We highlight that substantially more observations are needed by the testing procedures that use more adaptive auditors as they are better at finding poor-performing subgroups.
This is desirable, as it means the auditing strategy was indeed quite good and a lot more data was needed to confidently reject the auditor's null.

\begin{figure}
    \centering
    \begin{minipage}{.5\textwidth}
\centering
  {\small \textit{(a) Auditor selects which unlabeled cases to annotate}} \\
    \begin{minipage}{.495\textwidth}\includegraphics[width=\textwidth]{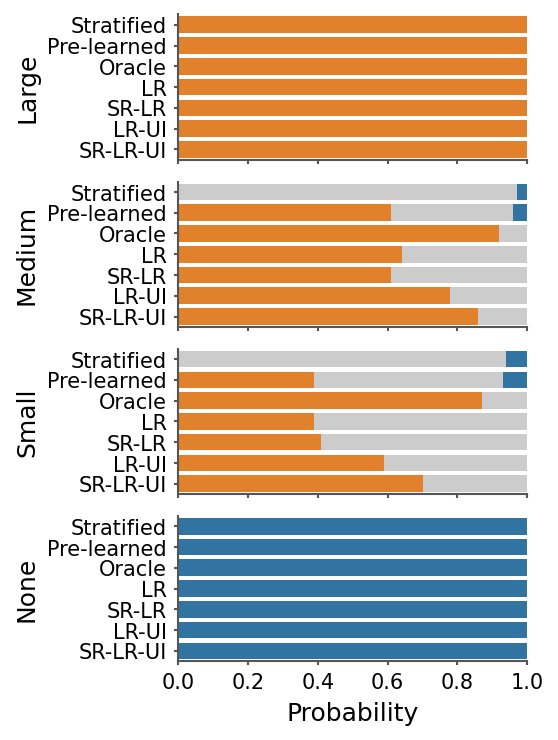}\end{minipage}%\hspace{8pt}%
    \begin{minipage}{.495\textwidth}\includegraphics[width=\textwidth]{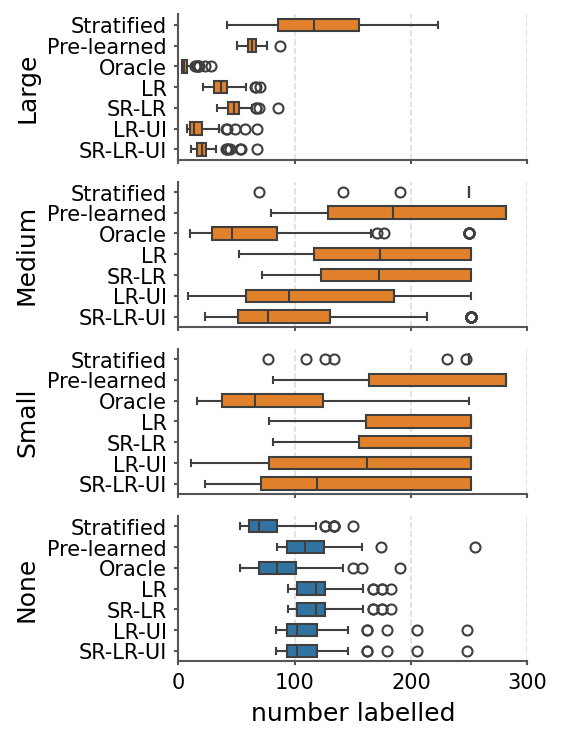}\end{minipage}
\end{minipage}%`
\begin{minipage}{.5\textwidth}
\centering
  {\small \textit{(b) Auditor generates test cases to annotate}}\\
  \begin{minipage}{0.495\textwidth}\includegraphics[width=\textwidth]{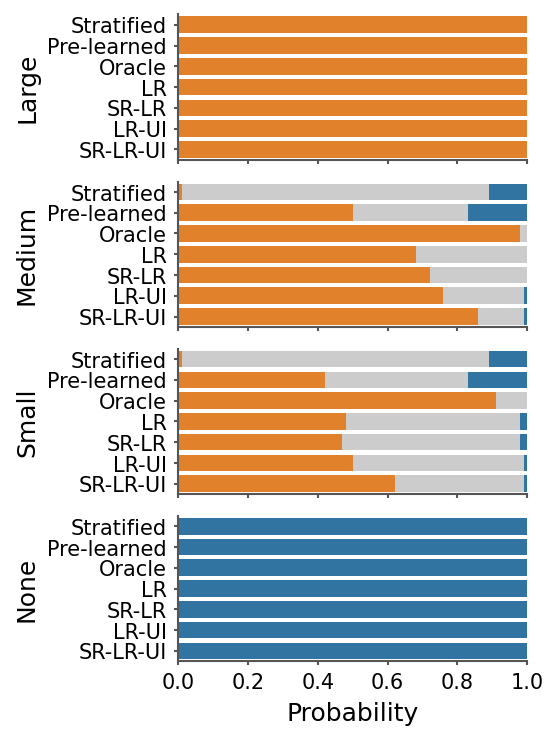}
    \end{minipage}%\hspace{8pt}%
    \begin{minipage}{0.495\textwidth}\includegraphics[width=\textwidth]{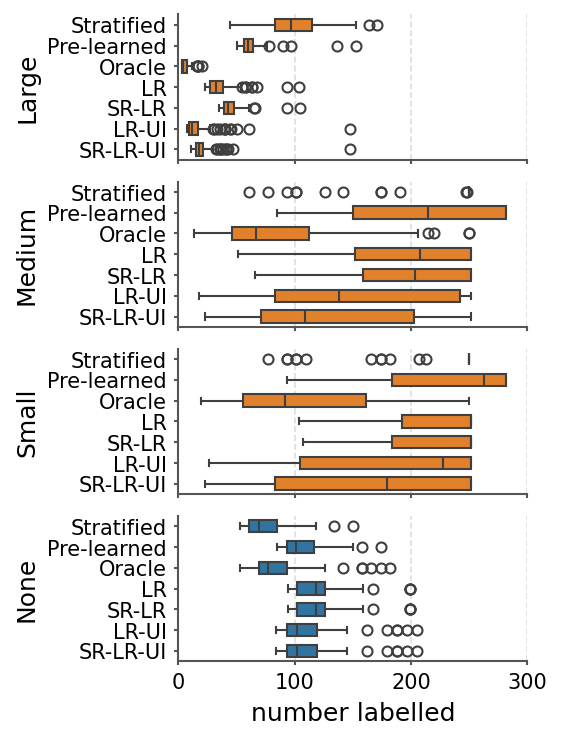 }\end{minipage}
\end{minipage}
\begin{minipage}{0.25\textheight}
    \centering
    \includegraphics[width = \textwidth]{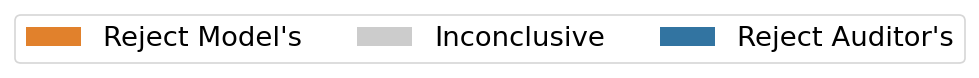}
\end{minipage}
    % \textit{(a) Auditor selects which unlabeled cases to annotate}
    % \includegraphics[width=\linewidth]{experiment1a_power.png}\\
    % \includegraphics[width=\linewidth]{experiment1a_arl.png}
    % \textit{(b) Auditor generates test cases to annotate}
    % \includegraphics[width=\linewidth]{experiment1b_power.png}
    % \includegraphics[width=\linewidth]{experiment1b_arl.png}
    \caption{Detecting failure modes with semi-synthetic data (Experiment~1). 
    The rows correspond to settings where the model has a failure mode with decreasing magnitude/prevalence of the degradation (\emph{Large}, \emph{Medium}, \emph{Small}), where the bottom corresponds to \emph{No failure mode}.
    % Difficulty of failure mode detection increases from left to right: Easy, Medium, Hard and Model's Null True.
    Left column: rate of rejecting dueling null hypotheses, where orange indicates rate of detecting a failure mode, blue indicate passing the audit, and grey correspond to neither (inconclusive).
    % In the \emph{Easy}, \emph{Medium} and \emph{Hard} settings, orange bars show power of detecting a failure mode and green indicate passing the audit.
    % In the \emph{No failure mode} setting, blue bars show falsely detecting a failure mode
    % and blue power of passing the audit. 
    % In all settings, green bars show the rate of inconclusiveness at the end of the audit.
    %\texttt{SR-LR-UI} has the highest power of rejecting model's null other than \texttt{Oracle}.
    % Type I error is well controlled for all methods.
    %(power for $H_0^{\model}$, Type~I error for $H_0^{\aud}$); 
    Right column: boxplots of the number of cases annotated to reach the correct conclusion or the maximum sample size.
    %\red{comment on the speed when Oracle-oracle is ready}
    % \texttt{LR}, \texttt{SR-LR}, \texttt{LR-UI} and \texttt{SR-LR-UI} are the e-processes from Section~\ref{sec:testing_procedure} applied with active sampling. 
    % These four labels correspond to the model's test statistic implemented and the auditor's test statistic is matched based on adaptivity.
    % \texttt{Stratified}, \texttt{Pre-learned} and \texttt{Oracle} are \texttt{LR} e-processes with corresponding passive sampling strategies. 
    % ; SR-LR-UI with active sampling achieves the best practical performance. Stratified sampling frequently fails to detect existing failure modes, instead rejecting the auditor's null.
    }
    \label{fig:exp1}
\end{figure}

\subsection{Experiment 2: Auditing an LLM pipeline for clinical notes}

We next applied our framework to a real-world task: auditing an LLM-based pipeline originally developed by \citet{Kothari2026-zo} that extracts social determinants of health (SDoH) from clinical notes across 28 categories, including housing stability, substance use, mental health, safety concerns, and patient support networks.
The original dataset includes human annotations across 81 clinical notes for all categories, yielding 2268 total comparisons.
The overall LLM-human agreement rate was 88\%, but category-level accuracy varied from 58\% to 100\%.
To evaluate the proposed auditing procedures on this dataset, we set the auditing threshold to $q = 0.83$.

Figure~\ref{fig:exp2} reveals similar qualitative patterns in real data, where \texttt{SR-LR-UI} achieves the highest power and requires the fewest samples for rejecting the model's null (except for \texttt{Oracle}).
Visualizing which SDoH categories are being selected by the adaptive auditor, we also find that the adaptive auditor indeed learns to concentrate on the low-accuracy categories (Figure~\ref{fig:exp2} right), with sampling frequency strongly negatively correlated with the category's accuracy (Spearman $\rho = -0.84$).

\begin{figure}
    \centering
    \includegraphics[width=0.22\linewidth]{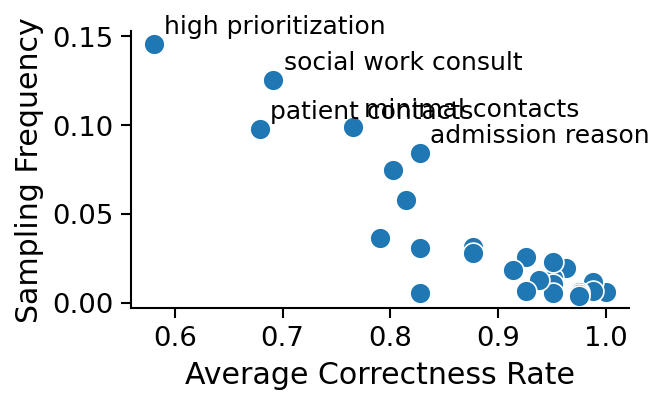}
    \includegraphics[width=0.24\linewidth]{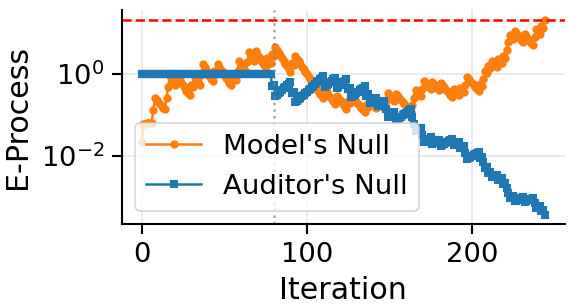}
    \includegraphics[width=0.24\linewidth]{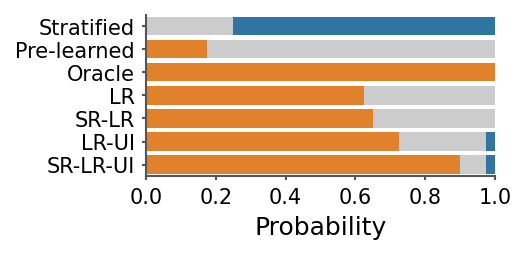}
    \includegraphics[width=0.24\linewidth]{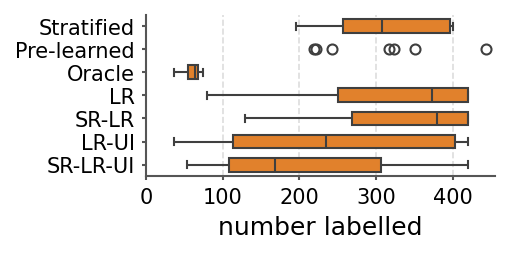}
    \caption{
        Auditing an LLM pipeline for extraction for Social Determinants of Health (SDoH) from clinical notes (Experiment 2).
        Leftmost: Sampling frequency vs.\ accuracy of different categories of SDoH clinical notes for \texttt{SR-LR-UI}, showing that the auditor learns to upsample low-accuracy categories.
        Left-center: Example dual e-process with orange and blue curves representing the tests for the model's and auditor's null, respectively.
        The test detects the AI system is not robust, as the orange curve exceeds the threshold of $1/\alpha$ (red dashed line).
        % Red dashed line is the rejection threshold of $1/\alpha$.
        Right-center: Rates in which the AI sytem fails the robustness audit (orange) versus passage of the audit (blue); gray is inconclusive.
        Rightmost: Boxplots of the number of clinical notes annotated until robustness is rejected or the maximum sample size is reached.
        %\red{make smaller plots}
    }
    \label{fig:exp2}
\end{figure}

\section{Discussion}
\label{sec:discussion}
To understand the statistical underpinnings of highly flexible and targeted evaluation of AI workflows, this work introduces a formal hypothesis testing with procedures that provide anytime-valid guarantees under arbitrary adaptive sampling and optional stopping.
The key insight is to represent robustness auditing of AI systems from two dueling perspectives: the model's null that asserts there are no failure modes, and the auditor's null that asserts they will find one.
Then using the testing-by-betting framework, we introduce e-process-based procedures that maintain finite-sample Type~I error control regardless of how adaptively the test cases are selected.
Code for reproducing all results are available at \url{https://www.github.com/jjfenglab/safe-active-testing}.

Future directions of this work include optimizing the auditing strategy, as this work aims to provide a statistical foundation for arbitrarily adaptive auditing strategies and treats auditing strategies as a black box.
In addition, our empirical evaluations focused on automated (but unpredictable) auditors to comprehensively test the statistical procedures across a variety of settings, but future work should study the suitability of this framework with human-in-the-loop systems.
Ultimately, by providing a general statistical framework for common auditing practices of modern AI systems, we hope to empower practitioners to confidently assess AI robustness without sacrificing statistical validity.

\bibliographystyle{plainnat}
\bibliography{paperpile}

\appendix

\renewcommand\thefigure{\thesection.\arabic{figure}}    
\setcounter{figure}{0}

\section{Proofs}

\begin{proof}[Proof of Theorem~\ref{thm:duality}]
    Recall the model's hypotheses are
    \begin{align*}
    \begin{split}
    H_0^{\model}: & \quad \forall S \in \mathcal{S}_{\epsilon}, \quad \Acc(S) \geq q,\\
    H_1^{\model}: & \quad \exists S \in \mathcal{S}_{\epsilon} \text{ such that } \Acc(S) < q.
    \end{split}
    \end{align*}
    and the auditor's hypotheses can be formulated as:
    \begin{align*}
    % \begin{split}
    %     H_0^{\aud, [m]}: &  \ \exists \tau \in [m] \text{ such that } \forall t \geq \tau, \ \Acc(\Stp) < q,\\
    %     H_1^{\aud,[m]}: & \not \exists \tau \in [m] \text{ such that } \forall t \geq \tau, \ \Acc(\Stp) < q.
    % \end{split}
    \begin{split}
        H_0^{\aud, [m]}: &  \ \exists \tau \in [m] \text{ such that } \forall t \geq \tau, \ \Acc(\Stp) < q,\\
        H_1^{\aud,[m]}: & \forall \tau \in [m], \exists t \geq \tau \text{ such that } \forall t \geq \tau, \ \Acc(\Stp) \geq q.
    \end{split}
    \end{align*}
    
    As $m \rightarrow \infty$, denote
    \begin{align}
    H_0^{\aud, \mathbb{N}}: = 
    \lim_{m \rightarrow \infty} H_0^{\aud, [m]}: &  \quad \exists \tau \in \mathbb{N} \text{ such that } \forall t \ge \tau, \Acc(S_t^{\pi}) < q \ \\
    H_1^{\aud, \mathbb{N}}: = 
    \lim_{m \rightarrow \infty} H_1^{\aud, [m]}: & \quad \forall \tau \in \mathbb{N}, \ \exists t \geq \tau \text{ such that } \Acc(S_t^{\pi}) \geq q 
    %\lim_{m \rightarrow \infty} H_1^{\aud, [m]}: & \quad \not \exists \tau \in \mathbb{N} \text{ such that } \forall t \geq \tau, \ \Acc(\Stp) < q.
    \end{align}

    % \begin{figure}[ht]
    %     \centering
    %     \includegraphics[width=0.5\linewidth]{/Duality.jpg}
    %     \caption{Visualization of duality}
    %     \label{fig:dualilty}
    % \end{figure}

    % \begin{theorem}[Duality of robustness and the auditor's hypotheses]
    % \label{thm:duality_append}
    % Under an asymptotically consistent and monoto auditing strategy $\pi$, the robustness null and audit hypotheses are complementary.
    % That is, the robustness null hypothesis holds iff the auditor's alternative hypothesis holds and vice versa:
    %     \begin{align}
    %         H_1^{\aud} \text{ true} &\iff H_0^{\model}\text{ true} \\
    %         H_0^{\aud} \text{ true}  &\iff H_1^{\model}\text{ true}.
    %     \end{align}
    % \end{theorem}

    Since $H_0^{\model}$ and $H_0^{\aud, \mathbb{N}}$ are the respective complements of $H_1^{\model}$ and $H_0^{\aud, \mathbb{N}}$, we only need to prove $H_1^{\model} \Leftrightarrow H_0^{\aud, \mathbb{N}}$. 
    
    Notice that $H_0^{\aud, \mathbb{N}}$ is equivalent to $\lim_{t \rightarrow \infty} \Acc(\Stp) < q$. If $H_0^{\aud, \mathbb{N}} $ is true, then there exists a large $T$ such that $\Acc(S_{T}^{\pi}) < q$ and $H_1^{\model}$ is true by taking $S^* = S_T^\pi$. On the other hand, if $H_1^{\model}$ is true, then by asymptotic consistency of the auditor's strategy, $\lim_{t \rightarrow \infty} \Acc(\Stp) < q $ and thus $H_0^{\aud, \mathbb{N} }$ is true. Therefore, $H_1^{\model}$ is true iff $H_0^{\aud, \mathbb{N}}$.
    % Denote events 
    % \begin{align*}
    %     %\Etp &= \left \{\Acc(S_t^\pi) < q \right\},    \\
    %     A &= \left\{ \lim_{t\rightarrow \infty} \Acc(S_t^\pi) < q \right\} = \cup_{\tau = 1}^{\infty} \cap_{t = \tau}^{\infty} \Etp,  \\
    %     B &= \left \{ \forall \ t, \ \Acc(S_t^\pi) \geq q \right\} = \cap_{t = 1}^{\infty} \Etp^c.
    % \end{align*}

    % \begin{enumerate}
    %     \item $H_0^{\model} \Rightarrow B$: always hold.
    %     \item $B \Rightarrow H_1^{\aud}$: $B = \cap_{t = 1}^{\infty} \Etp^c \subseteq \cap_{\tau = 1}^{\infty} \cup_{t = \tau}^{\infty} \Etp^c = H_1^{\aud}$.
    %     \item[3./6.] $H_0^{\aud} \Leftrightarrow A$: by definition.
    %     \item[4.] $A \Rightarrow H_1^{\model}$: $A$ implies that $\exists$ large $t$ s.t. $\Acc(S_t^\pi) < q$. Take $\Tilde{S} = S_t^{\pi}$.
    %     \item[5.] $H_1^{\model} \Rightarrow A$: guaranteed by Asymptotic Consistency.
    %     \item[7.] $H_1^{\aud} \Rightarrow B$: by monotonocity of $\pi$, $\Etp^c = \cup_{\tau = t}^\infty E_{\tau, \pi}^c \ \Rightarrow H_1^{\aud} = \cap_{\tau=1}^{\infty} \cup_{t = \tau}^{\infty} \Etp^c = \cap_{\tau = 1}^\infty \Etp^c = B $.
    %     \item[8.] $B \Rightarrow H_0^{\model}$: Asymptotic Consistency 
    % \end{enumerate}
\end{proof}

\begin{proof}[Proof of Theorem~\ref{thm:auditor_start}]
    For any $m' \in [m]$, Let $S^{1, \pi} = \left\{S_1^{1, \pi}, S_2^{1, \pi}, \dots,  \right\}$ be any sequence of subsets which are identified by the auditor's sampling strategy $\pi$ and satisfy $H_1^{\aud, m'}$. 
    Suppose, to the contrary, that $\sup_{t < m'} \mathbb{E}(\phi(t) ; S_1^{1, \pi}, \dots, S_t^{1, \pi}) > \alpha$, i.e., the power can be above $\alpha$ at some $t < m' $.
    
    Next, construct $S^{0, \pi} = \left\{ S_1^{0, \pi}, S_2^{0, \pi}, \dots, \right\} $ such that $S^{0, \pi}$ satisfies $H_0^{\aud, m'} $ and  $S_t^{0, \pi} = S_t^{1, \pi}$ for $t < m' $. %This is possible because $H_0^{\aud} $ and $H_1^{\aud} $ are both tail events.
    However, this implies 
    \begin{align*}
        \sup_{t < m'} \mathbb{E}(\phi(t) ; S_1^{0, \pi}, \dots, S_t^{0, \pi}) = 
        \sup_{t < m'} \mathbb{E}(\phi(t) ; S_1^{1, \pi}, \dots, S_t^{1, \pi}) > \alpha,
    \end{align*}
    which contradicts $\sup_t \mathbb{E}\left (\phi(t)|H_0^{\aud, m'} \text{ is true} \right) \leq \alpha $. Therefore, by contradiction, $\sup_{t < m'} \mathbb{E}\left(\phi(t) | H_1^{\aud, m'} \text{ is true}\right) \leq \alpha$, which holds for any $m' \in [m]$.    
    
\end{proof}

\begin{proof}[Proof for Theorem~\ref{thm:fwer}]

If the model's null is true, then the auditor's null cannot be true.
If the auditor's null is true, then the model's null cannot be true.
Since only one of these null hypotheses can be true, then if both of the individual tests are controlled at level $\alpha$, the FWER is controlled at level $\alpha$.
\end{proof}

\section{Testing procedures}

\subsection{Pseudocode for dual testing procedure}

The pseudocode for dual testing is given in Algorithm~\ref{algo:dual_testing}.
The model's test $E_t^{\model}$ can be $E_t^{\SPRT, \model}$, $E_t^{\UI, \model}$, $E_t^{\SR\SPRT, \model}$ or $E_t^{\SR\UI, \model}$ and the auditor's test $E_t^{\aud}$ can be $E_t^{\UI, \aud}$ or $E_t^{\SR\SPRT, \aud}$. 
% In our experiments, the model's and auditor's tests are matched according to adaptivity so that the dual processes $\left(E_t^{\model}, E_t^{\aud}\right)$ are both nonadaptive or both adaptive, though this is not neceesary in practice.
% Thus, the four candidates of $\left(E_t^{\model}, E_t^{\aud}\right)$ considered are $\left(E_t^{\SPRT, \model}, E_{t}^{\SPRT, \aud}\right)$, $\left(E_t^{\SR\SPRT, \model}, E_{t}^{\SPRT, \aud}\right)$, $\left(E_t^{\UI, \model}, E_{t}^{\UI, \aud}\right)$ and $\left(E_t^{\SR\UI, \model}, E_{t}^{\UI, \aud}\right)$.

\begin{algorithm}[H]
\caption{Dual testing}
\label{algo:dual_testing}
\begin{algorithmic}[1]
\State Initialize $E_0^{\model} = E_{0}^{\aud} = 1$, an auditing strategy $\pi$.
% \For{$t = 1, \ldots, n_{\text{init}}$}  \Comment{Burn-in: warm-start $\pi$, no testing}
%     \State Sample $X_t$ uniformly at random from $\mathcal{S}$.
%     \State Obtain score $Y_t$ for $X_t$.
%     \State Update $\pi$ using $(X_t, Y_t)$.
% \EndFor
% \For{$t = n_{\text{init}} + 1, n_{\text{init}} + 2, \ldots$}
%     \State ... (rest of loop)
% \EndFor
\For{$t = 1, 2, \ldots$}
    \State Pick subgroup $\Stp$ based on the audit strategy $\pi$.
    \State Sample test observation $X_t$ from $\Stp$.
    \State Evaluate the AI system's response on $X_t$ and obtain the score $Y_t$.
    \State Update $E_t^{\model}$.
    \If{$E_t^{\model} > 1/\alpha$}
        \State \Return Reject $H_0^{\model}$: Failure mode is detected.
    \EndIf
    \If{$t\geq m$}
        \State Update $E_{t}^{\aud}$
        \If{$E_t^{\aud} > 1/\alpha$}
        \State \Return Reject $H_0^{\aud, m}$: Audit is passed.
    \EndIf
    \EndIf
\EndFor
\end{algorithmic}
\end{algorithm}

\subsection{Auditing strategies}
\label{sec:active_learning}

With rigorous finite-sample inference being possible under active testing based on theoretical results above, the practical value of this framework depends on having a good auditing strategy $\pi$ for discovering failure modes.

\textbf{Scenario (a): Unlabeled data.}
If one has unlabelled data from the target population, one could learn a poor-performing subgroup with online learning. 
For instance, one may (re)train an AI/ML model $\hat{\pi}$ at the start of each iteration $t$ on labelled data collected up to time $t-1$ to estimate the conditional expectation of the score $E[Y|X]$.
This can then be used to determine sampling.
An appropriate exploitation-exploration tradeoff should be made, to ensure asymptotic consistency of the auditor.
In our experiments, the auditor greedily defines the subset $\Stp$ as the observations $X$ predicted to be in the bottom $\epsilon$ percentile \textit{and} have predicted scores $\hat{\pi}(X)$ below the target threshold $q$.
If such an $\epsilon$-sized subgroup does not exist, then the auditor favors exploration at time $t$ instead by sampling among observations whose lower confidence bound of the predicted score falls within the bottom 50\%, following the UCB strategy in the bandit literature \citep{Auer2002-rl}.

%simply by training an online learning procedure to predict accuracy, based on the sequence of observations that are active labelled through this active testing procedure.

%If there exists at least a proportion $\epsilon$ of observations whose scores are below the threshold $q-\delta$, then one observation is sampled from this group to be tested. 
%Otherwise, randomly sample from the remaining one. 

%Specifically, let $S_t$ denote the cumulative loss up to time $t$ and $V(X_{t+1})$ the individual score at $t+1$. Then $S_{t+1} = (1-\lambda) S_t + \lambda V(X_{t+1})$. 

% For example, suppose the target is to test the robustness of an LLM model. One begins by randomly sampling a group LLM queries and revealing the labels  whether LLM performs satsificatorily $\{Y_{0, i}\}_{i=1}^n$. Then an ensemble of NNs are trained on the embeddings of the queries and the labels. 

% Implementation options that we can discuss:
% \begin{itemize}
%     \item online learning with embedding model, greedy selection. we use ensemble of NN for the learning model, following \citep{Lakshminarayanan2017-wn}.
% \end{itemize}

\textbf{Scenario (b): Data generation.}
In certain settings, one may not have a pre-existing unlabeled dataset $\{X_j\}$.
This can occur, for instance, when one is building a new AI pipeline, where no data has yet been collected from the target application.
To audit such an AI pipeline, the auditor must generate $X$ values to audit instead.
In the ideal setup, the auditor can (i) determine that its selected subgroup $\Stp$ has prevalence at least $\epsilon$ and (ii) generate observations from $\Stp$ with respect to the probability measure $\mu$ restricted to the subgroup.
While the former is plausible as long as the auditor has adequate prior knowledge, the latter is highly dependent on the auditor's generation capabilities in practice (both in the setting where $X$ is constructed manually or automatically using a foundation model).
In such cases, an accurate and more honest description of the audit should be with respect to the actual data generation mechanism, which is an approximation of the probability measure in the target application.

\section{Experiment 1: Detailed Setup}
\label{app:exp1_details}

We provide complete details for the semi-synthetic experiments described in Section~\ref{sec:experiments}.

\paragraph{Data and label simulation.}
We use the GPQA dataset~\citep{Rein2023-dv} which contains 448 graduate-level multiple-choice science questions spanning three domains: Chemistry (187 questions), Biology (142 questions), and Physics (119 questions). Each question is also categorized by education level: easy undergraduate, hard undergraduate, hard graduate, and post-graduate.

We simulate binary accuracy labels (correct/incorrect) by assigning each domain$\times$education level combination a ground-truth accuracy probability. As an illustration, Table~\ref{tab:difficulty_settings_1a} shows the accuracy probabilities in Scenario (a) discussed below for Chemistry (the poor-performing subgroup) across education level; Biology and Physics maintain accuracy $\geq 0.85$ in all settings.
The null threshold is $q = 0.85$ throughout.
% More details of all simulation settings can be found at \url{https://anonymous.4open.science/r/safe-active-testing-4281}.

% \begin{table}[h]
% \centering
% \begin{tabular}{lcccc}
% \toprule
% Degradation & Chemistry Accuracy & Gap from $q$ & Model's Null \\
% \midrule
% Large & 0.25--0.40 & 0.45--0.60 & False \\
% Medium & 0.68--0.72 & 0.13--0.17 & False \\
% Small & 0.70--0.75 & 0.10--0.15 & False \\
% None  & 0.92--0.96 & $> 0$ (above $q$) & True \\
% \bottomrule
% \end{tabular}
% \caption{Simulated accuracy settings for Chemistry across magnitude of the degradation. Biology and Physics domains maintain accuracy $\geq 0.85$ in all settings.}
% \label{tab:difficulty_settings}
% \end{table}

\begin{table}[h]
\centering
\begin{tabular}{lcccc}
\toprule
Degradation & Chemistry Accuracy & Gap from $q$ & Model's Null \\
\midrule
Large & 0.25--0.40 & 0.45--0.60 & False \\
Medium & 0.50--0.75 & 0.35--0.10 & False \\
Small & 0.60--0.77 & 0.25--0.08 & False \\
None  & 0.92--0.96 & $> 0$ (above $q$) & True \\
\bottomrule
\end{tabular}
\caption{Simulated accuracy settings for Chemistry across magnitude of the degradation in Experiment 1 Scenario (a). Biology and Physics domains maintain accuracy $\geq 0.85$ in all settings.}
\label{tab:difficulty_settings_1a}
\end{table}

% \begin{table}[!h]
% \centering
% \begin{tabular}{lcccc}
% \toprule
% Degradation & Chemistry Accuracy & Gap from $q$ & Model's Null \\
% \midrule
% Large & 0.25--0.40 & 0.45--0.60 & False \\
% Medium & 0.70--0.78 & 0.15--0.07 & False \\
% Small & 0.72--0.80 & 0.13--0.05 & False \\
% None  & 0.92--0.96 & $> 0$ (above $q$) & True \\
% \bottomrule
% \end{tabular}
% \caption{Simulated accuracy settings for Chemistry across magnitude of the degradation in Experiment 1 Scenario (b) \red{Move the table below scenario b?}. Biology and Physics domains maintain accuracy $\geq 0.85$ in all settings.}
% \label{tab:difficulty_settings_1b}
% \end{table}

\paragraph{Testing parameters.} The maximum sample budget is 250 observations. We initialize with $n_{\text{init}} = 2$ randomly sampled observations before active selection begins. The \texttt{Pre-learned} sampler uses an additional 30 held-out observations for pre-training.

\paragraph{Adaptive sampling}
The adaptive sampling trains a model of the accuracy to identify the subgroup most likely to be a failure mode.
The accuracy model uses the text embedding of the GPQA question and its metadata (domain and education level) as input, generated from the \texttt{sentence-transformers/allenai-specter} model \citep{Cohan2020-jp}.
Accuracy is modeled using an ensemble of neural networks with one hidden layer, tuned over the hyperparameters of hidden dimension $ \in \{5, 10\}$ and number of epochs $\in \{2, 4, 8, 16\}$.
At each iteration, the best hyperparameter was based on the one with the smallest exponentially weighted held-out loss, which allows the selection procedure to adapt to shifts in the optimal hyperparameter as the accuracy model improved over time.
We use $\epsilon = 0.05$ for $\epsilon$-greedy exploration.

For UI-based estimators, we construct adaptive alternative probabilities in \texttt{\UI} and \texttt{\SR\UI} using the Exponentially Weighted Averaging Forecaster algorithm \citep{Cesa-Bianchi2006-tl}.
That is, the predicted probability for testing the model's null, $\hgamm$, is an adaptively weighted average over a prespecified grid of $B$ plausible probabilities $\mathcal{Q}_{1}^{\model} = \{q_b\in (0, q): b \in [B]\}$ under $H_1^{\model}$.
In particular, $\hgamm = \sum_{b = 1}^{B} u_{t-1, b} q_b$ with the weights $u_{t-1,b}$ defined as
\begin{align*}
    u_{t, b} = \frac{u_{t-1,b}\exp\left(\lambda \log \left( \frac{p(Y_t;q_b)}{p(Y_t;q)} \right)\right)}{\sum_{b'=1}^B u_{t-1,b'}\exp\left(\lambda \log \left( \frac{p(Y_t;q_b')}{p(Y_t;q)} \right)\right)}.
\end{align*}
The predicted probability for testing the auditor's null, $\hat{\gamma}_{t-1}^{\aud}$, is computed similarly.

\paragraph{Scenario (a): Unlabeled data.}
The auditor has access to all 448 GPQA questions without labels.
At each iteration, the auditor selects one question to label based on predicted accuracy (greedy selection of lowest predicted accuracy, with $\epsilon$-greedy exploration).
The selected question is then removed from the pool.

\paragraph{Scenario (b): Data generation.}
The auditor generates a question from each of the $3 \times 3$ categories corresponding to the levels of domain and education level.
(To control computational costs, we pre-generate a question bank using an LLM for each category, rather than querying an LLM at every iteration.)
The auditor then selects among the generated candidates using $\epsilon$-greedy exploration.

\paragraph{Evaluation.}
We run 100 simulation replicates for each simulation setting.
For each replicate, we record: (1) which null hypothesis was rejected ($H_0^{\model}$ or $H_0^{\aud}$); (2) the stopping time (number of observations until correct rejection or the maximum sample size is reached). We report rejection rates and boxplots of stopping times across trials.

\paragraph{Compute time.}
The procedure is quick to run and requires minimal infrastructure.
With only two cpu-cords with 5GB memory, conducting a single audit completed within 15 minutes on average.
Most of the time was spent updating the auditor with each new observation, as the auditor conducted hyperparameter tuning of the NN ensemble at each iteration.
Significant speed-ups are possible by parallelizing hyperparameter tuning.

\paragraph{Ablation study on pre-auditor's budget $m$.} We perform an ablation study to examine the effect of the budget $m$ prior starting the auditor's test on the performance of the proposed procedures. 
Specifically, we evaluate $m \in \{10, 40, 70\}$ while holding all other experimental settings constant. 
As shown in Figure~\ref{fig:exp1_ablation_m10} for $m=10$ and Figure~\ref{fig:exp1_ablation_m70} for $m=70$, results are qualitatively similar to those with $m = 40$ in Figure~\ref{fig:exp1} in the main text.
Thus, the procedure is relatively insensitive to the choice of budget $m$.

% \red{power of $\SR\UI$ increases with $m$ in both scenarios. However, power of $\UI$ increases in Scenario (b) but decreases in Scenario (a). Change is small for $\UI$. ARL of rejecting $H_0^{\model}$ remains almost the same and that of rejecting $H_1^{\aud}$ increases, which is expected since auditor's test starts late. Don't feel like we need to describe this.}

\begin{figure}[!ht]
    \centering
    \begin{minipage}{.5\textwidth}
        \centering
        {\small \textit{(a) Auditor selecting unlabeled cases to annotate}} \\
        \begin{minipage}{.49\textwidth}
            \includegraphics[width=\textwidth]{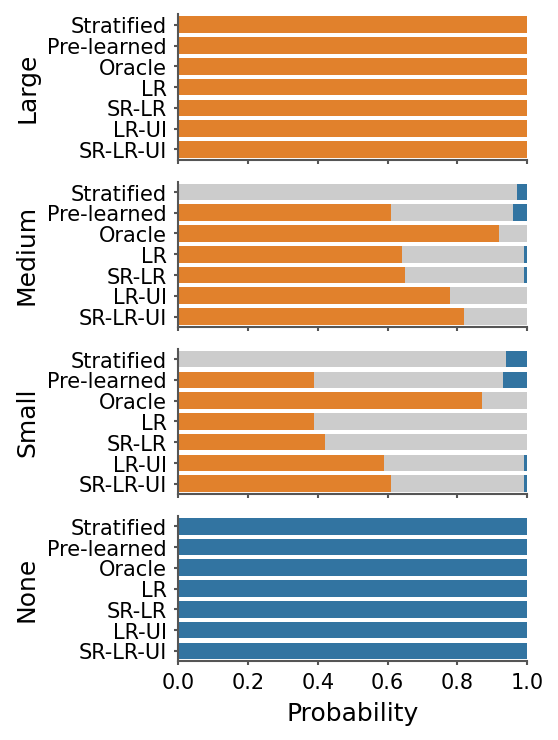}
        \end{minipage}
        \begin{minipage}{.49\textwidth}
            \includegraphics[width=\textwidth]{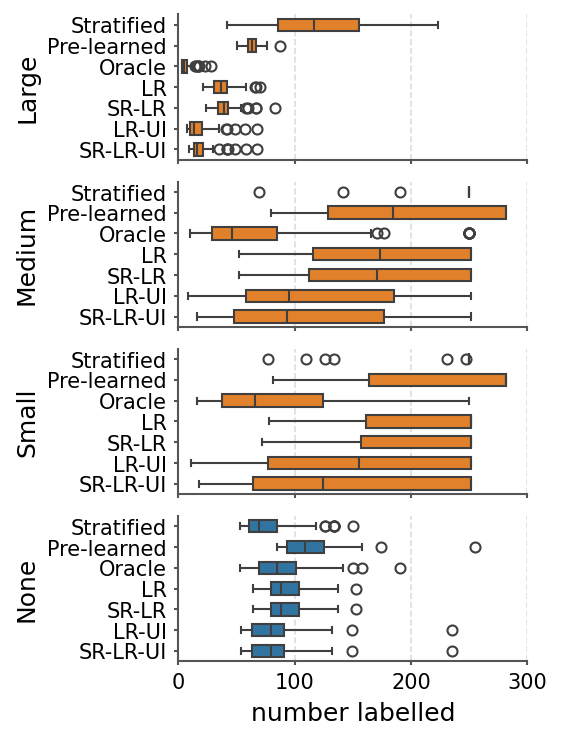}
        \end{minipage}
    \end{minipage}%`
    \begin{minipage}{.5\textwidth}
        \centering
        {\small \textit{(b) Auditor generates test cases to annotate}}\\
        \begin{minipage}{0.495\textwidth}
            \includegraphics[width=\textwidth]{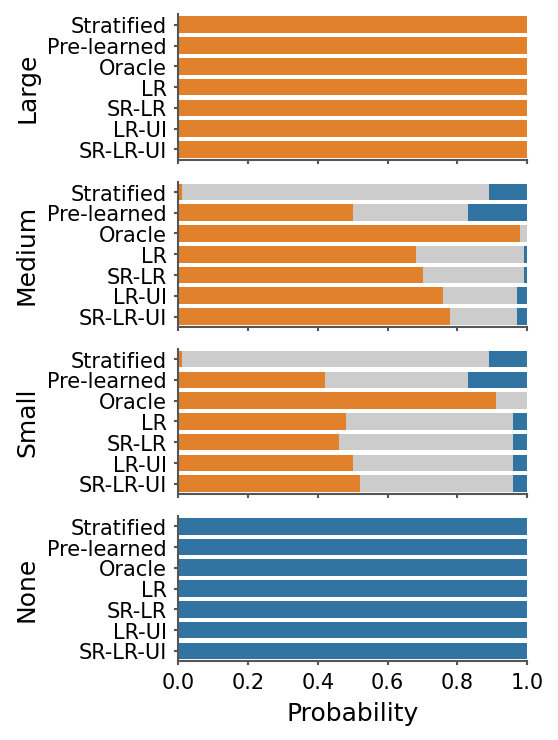}
        \end{minipage}%\hspace{8pt}%
        \begin{minipage}{0.495\textwidth}
            \includegraphics[width=\textwidth]{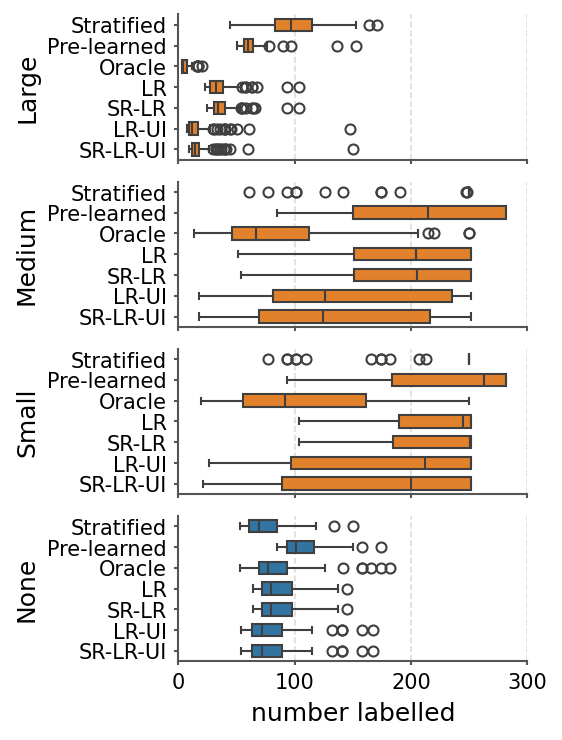 }
        \end{minipage}
    \end{minipage}
    \begin{minipage}{0.3\textheight}
        \centering
        \includegraphics[width = \textwidth]{plot_agg_legend.png}
    \end{minipage}
   \caption{Detecting failure modes with semi-synthetic data (Experiment~1) with $\mathbf{m=10}$. 
    The rows correspond to settings where the model has a failure mode with decreasing magnitude/prevalence of the degradation (\emph{Large}, \emph{Medium}, \emph{Small}), where the bottom corresponds to \emph{No failure mode}.
    Left column: rate of rejecting dueling null hypotheses, where orange indicates rate of detecting a failure mode, blue indicate passing the audit, and grey correspond to neither (inconclusive).
    Right column: boxplots of the number of cases annotated during the audit to reach the correct conclusion or the maximum sample size.
    }
    \label{fig:exp1_ablation_m10}
\end{figure}

\begin{figure}[!ht]
    \centering
    \begin{minipage}{.5\textwidth}
        \centering
        {\small \textit{(a) Auditor selecting unlabeled cases to annotate}} \\
        \begin{minipage}{.49\textwidth}
            \includegraphics[width=\textwidth]{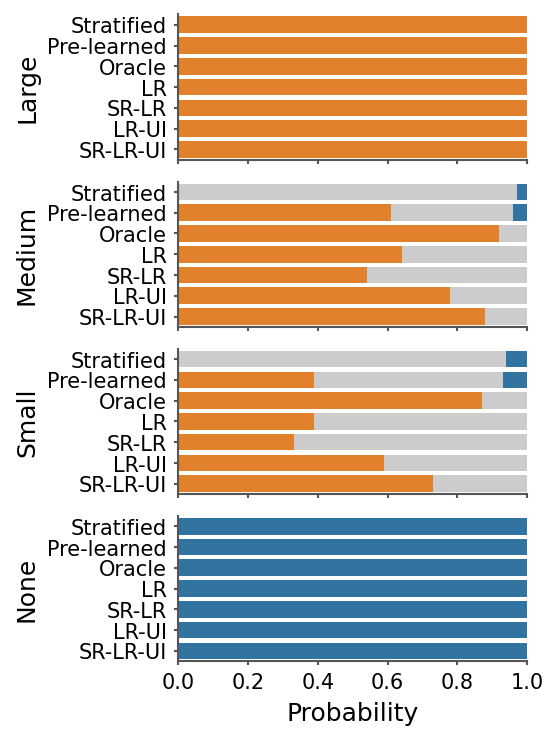}
        \end{minipage}
        \begin{minipage}{.49\textwidth}
            \includegraphics[width=\textwidth]{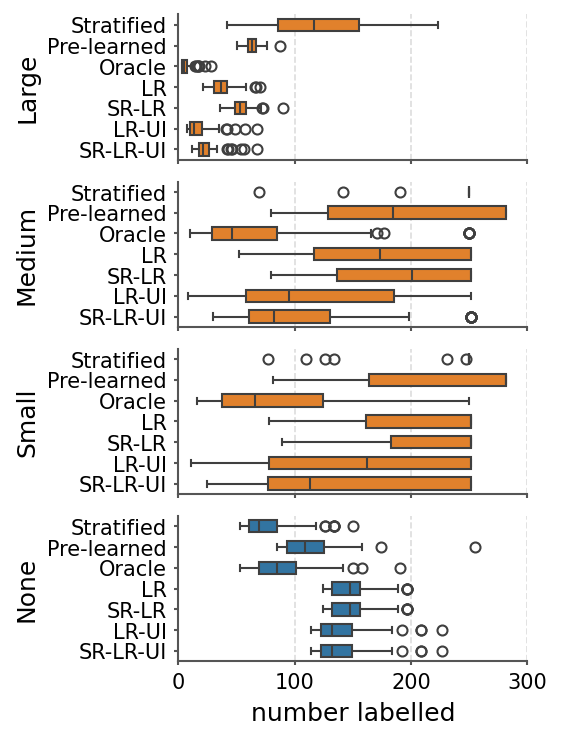}
        \end{minipage}
    \end{minipage}%`
    \begin{minipage}{.5\textwidth}
        \centering
        {\small \textit{(b) Auditor generates test cases to annotate}}\\
        \begin{minipage}{0.495\textwidth}
            \includegraphics[width=\textwidth]{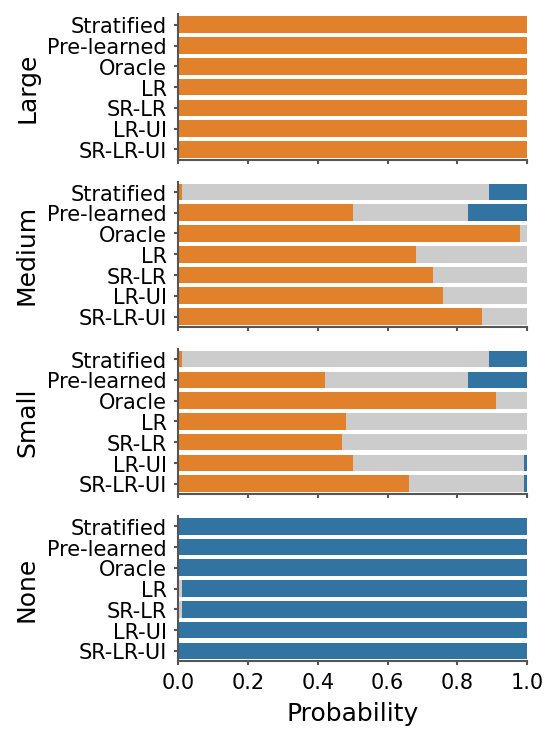}
        \end{minipage}%\hspace{8pt}%
        \begin{minipage}{0.495\textwidth}
            \includegraphics[width=\textwidth]{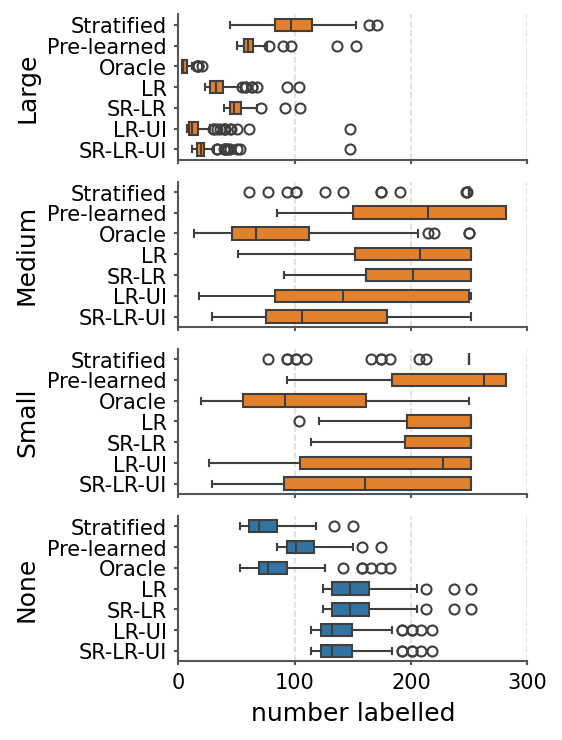 }
        \end{minipage}
    \end{minipage}
    \begin{minipage}{0.3\textheight}
        \centering
        \includegraphics[width = \textwidth]{plot_agg_legend.png}
    \end{minipage}
    \vspace{-0.2cm}
    \caption{Detecting failure modes with semi-synthetic data (Experiment~1) with $\mathbf{m=70}$. 
    The rows correspond to settings where the model has a failure mode with decreasing magnitude/prevalence of the degradation (\emph{Large}, \emph{Medium}, \emph{Small}), where the bottom corresponds to \emph{No failure mode}.
    Left column: rate of rejecting dueling null hypotheses, where orange indicates rate of detecting a failure mode, blue indicate passing the audit, and grey correspond to neither (inconclusive).
    Right column: boxplots of the number of cases annotated during the audit to reach the correct conclusion or the maximum sample size.
    }
    \label{fig:exp1_ablation_m70}
\end{figure}

\section{Experiment 2: Detailed Setup}
\label{app:exp2_details}

We provide complete details for the clinical NLP experiment described in Section~\ref{sec:experiments}.

\paragraph{Data and task.}
We evaluate an LLM-based pipeline for extracting social determinants of health (SDoH) from clinical notes at an urban medical center.
The pipeline processes discharge summaries and social work notes to extract structured information across 28 SDoH categories, including housing stability, substance use patterns, mental health needs, safety concerns, patient contacts, and care coordination details.
The dataset consists of 81 clinical notes with human expert annotations for each of the 28 categories, yielding 2,268 total note$\times$category comparisons.
Each comparison is scored as correct (1) if the LLM extraction matches the human annotation, and incorrect (0) otherwise.
Table~\ref{tab:sdoh_categories} provides the complete list of SDoH categories with their definitions as specified in the LLM prompts.

\paragraph{Category-level performance.}
The overall LLM-human agreement rate is 88\%, but performance varies substantially across categories.
The worst-performing categories include: high prioritization (58\%), patient contacts (68\%), social work consult (69\%), minimal contacts (77\%), and mental health (79\%).
The best-performing categories achieve 95--100\% agreement, including substance use date of last use (100\%), immigration interventions (99\%), and several substance use subcategories.
We set the null threshold $q = 0.85$, so 11 of the 28 categories constitute failure modes.

\paragraph{Adaptive learning.}
For each note$\times$category pair, we construct input features by concatenating (1) the category-specific prompt definition from Table~\ref{tab:sdoh_categories} and (2) the clinical note text.
We compute embeddings using the \texttt{sentence-transformers/allenai-specter} model \citep{Cohan2020-jp}.
The accuracy model is the same neural network ensemble as that in Experiment~1.

\paragraph{Testing parameters.}
We set $m = 40$ (audit budget), maximum sample size of 400, and $n_{\text{init}} = 20$ initial random samples.
We report rejection rates and stopping times from 40 simulation replicates.

\begin{table}
\centering
\footnotesize
\begin{tabular}{p{3.5cm}p{11cm}}
\toprule
\textbf{Category} & \textbf{Definition} \\
\midrule
Reason for Admission & Identify the reason for the admission to the hospital. \\
\addlinespace
Reason for SW Consult & For social worker consult note types only, identify the reason for the social work consult. \\
\addlinespace
Mental Health & Identify any mention of mental health needs: depression, anxiety, suicidal thoughts, or psychiatric symptoms impacting functioning. \\
\addlinespace
Mental Health Interventions & Extract actions/interventions for mental health needs, categorized as previous, current, or planned actions. \\
\addlinespace
Safety & Identify any mention of safety needs: abuse, domestic violence, assault, interpersonal violence, recurrent falls, or self-neglect/caregiver neglect. \\
\addlinespace
Safety Interventions & Extract actions/interventions for safety needs, categorized as previous, current, or planned actions. \\
\addlinespace
Housing & Identify any mention of housing needs: homelessness, housing instability, unsafe living conditions, or inability to afford housing. \\
\addlinespace
Housing Interventions & Extract actions/interventions for housing needs. Look for: home health, shelters, navigation centers, Medical Respite, SNF referrals, supportive housing. \\
\addlinespace
Substance Use & Identify any mention of substance use needs: illegal drugs, prescription drug misuse, or problematic controlled substance use. \\
\addlinespace
Substance Use Interventions & Extract actions/interventions for substance use needs. Look for: detox referrals, residential/outpatient SUD programs, Addiction Care Team. \\
\addlinespace
Substance Use - OUD & Opioid Use Disorder: Identify if the patient has problematic opioid use (including fentanyl or heroin). \\
\addlinespace
Substance Use - StUD & Stimulant Use Disorder: Identify if the patient has problematic stimulant use (amphetamines, cocaine, methamphetamine). \\
\addlinespace
Substance Use - PSUD & Polysubstance Use Disorder: Identify if the note explicitly mentions PSUD. \\
\addlinespace
Substance Use - MOUD & Medication for OUD: Identify if the patient is on methadone or buprenorphine. \\
\addlinespace
Substance Use - Date of Last Use & If the patient has substance use history, identify the date of last use. \\
\addlinespace
Food Insecurity & Identify any mention of food insecurity: inability to access adequate food due to financial constraints. \\
\addlinespace
Food Insecurity Interventions & Extract actions/interventions for food insecurity. Look for: Meals on Wheels, Great Plates, Project Open Hand. \\
\addlinespace
Immigration & Identify any mention of immigration needs: documentation concerns, deportation fears, language barriers. \\
\addlinespace
Immigration Interventions & Extract actions/interventions for immigration needs. \\
\addlinespace
Tobacco Use & Identify any mention of tobacco use: cigarettes, vaping, chewing tobacco, or other nicotine products. \\
\addlinespace
Tobacco Use Interventions & Extract actions/interventions for tobacco use needs. \\
\addlinespace
Alcohol Use & Identify any mention of alcohol use needs: excessive drinking, binge drinking, or alcohol-related problems. \\
\addlinespace
Alcohol Use Interventions & Extract actions/interventions for alcohol use needs. \\
\addlinespace
DME: Mobility Devices & Identify durable medical equipment the patient uses: wheelchairs, walkers, hospital beds, oxygen, CPAP/BiPAP, etc. \\
\addlinespace
Patient Contacts & Identify references to patient's contacts: friends, family, case workers. Include name, phone number, and relation. \\
\addlinespace
Lack of Social Support & Identify missing/minimal contacts or social isolation: widowed, lives alone, no family involved, no emergency contact. \\
\addlinespace
Patient Independence & Identify inability to perform ADLs (bathing, dressing, toileting, eating) or IADLs (medications, transportation, cooking, finances). \\
\addlinespace
High Priority & Identify if patient is high priority: homeless and discharge ready, Jane/John Doe, abuse/neglect/safety, need family located, death/dying, major life-changing event. \\
\addlinespace
Outpatient Therapies & Identify home health services or SNF: PT/OT/SLP/RN/SW that come to patient's home, or IHSS caregivers. \\
\bottomrule
\end{tabular}
\caption{SDoH categories and their definitions as specified in the LLM extraction prompts. Definitions are abbreviated for space; full prompts include additional examples and classification guidance.}
\label{tab:sdoh_categories}
\end{table}

\section{More Results: CUB-200-2011 Image Data}

% \begin{figure}
%     \centering
%     % \includegraphics[width=\linewidth]{experiment1_subgroups.pdf}
%     \includegraphics[width=\linewidth]{experiment2_subgroups.pdf}
%     \caption{ARL Subgroup sizes that were proposed during active testing, for data settings that are increasingly difficult. When null is true, the auditor decides to explore more and opts to sample from larger subgroups. When the auditor can easily find the subgroup, it only samples from very small targeted subgroups.}
%     \label{fig:exp2_subgroups}
% \end{figure}

\paragraph{Data and task.}
We evaluate an image classifier on the CUB-200-2011 dataset~\citep{Wah2011-kc}, which contains 11,788 images of 200 bird species.
The model under audit is a ResNet-18 classifier.
The auditor's goal is to determine whether there exists a subgroup whose accuracy falls below a null threshold $q = 0.85$.
Binary accuracy labels are simulated by assigning each species a ground-truth accuracy probability, with white species receiving the highest accuracy, darker species moderate accuracy, and 18 black species the lowest.

\paragraph{Adaptive learning.}
We trained an accuracy model using the same NN ensemble architecture with embeddings extracted from a pretrained ResNet-18 \citep{He2015-no} as input features.

\paragraph{Testing parameters.}
We set the budget $m = 40$ prior to start the auditor's test, maximum sample budget of 250 observations, and $n_{\text{init}} = 20$ initial random observations before active selection begins.
The \texttt{Pre-learned} sampler uses an additional 30 held-out observations for pre-training.
We run 40 independent trials per method.
Results are shown in Figure~\ref{fig:cub-birds}.

\begin{figure}
    \centering
    \includegraphics[width=0.45\linewidth]{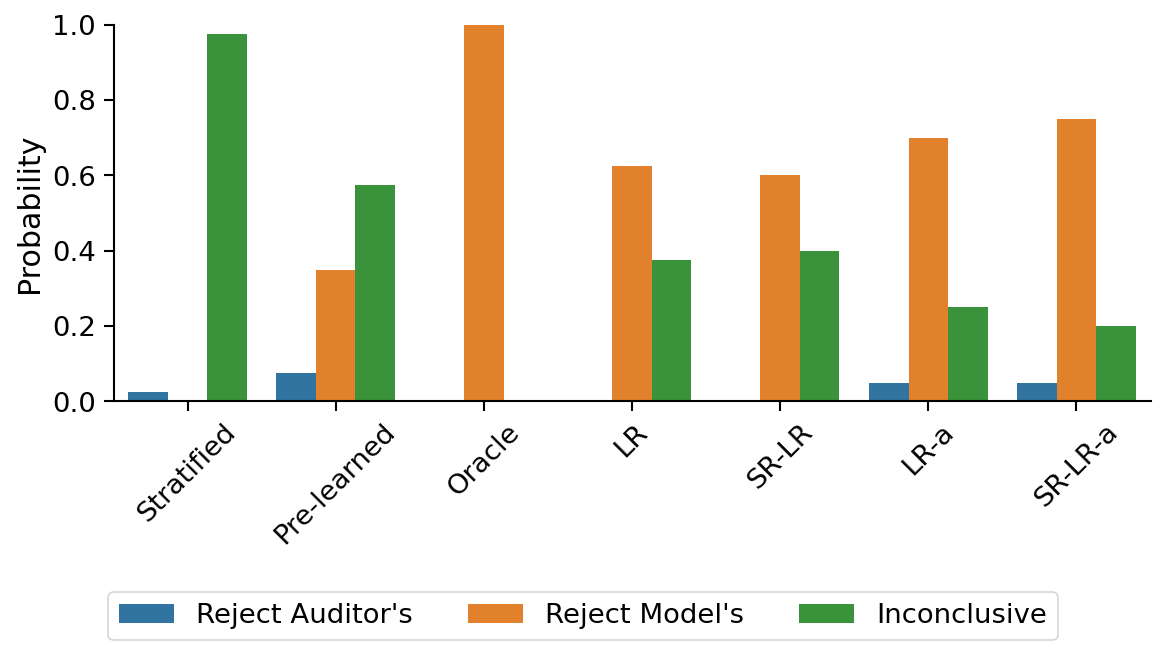}
    \includegraphics[width=0.45\linewidth]{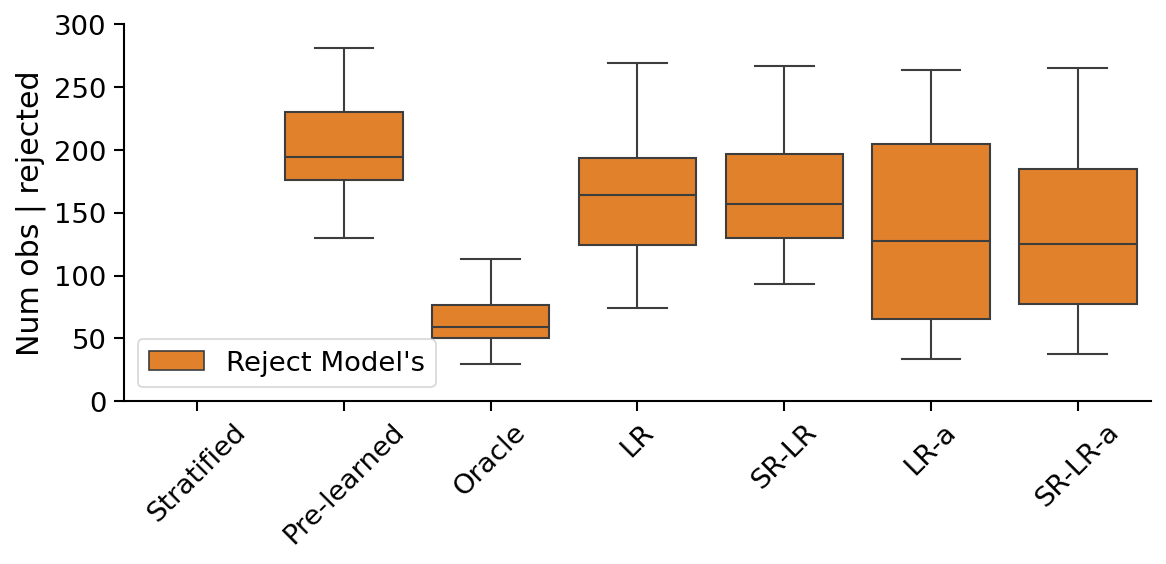}
    \caption{Auditing a pre-trained ResNet-18 classifier for CUB-200-2011 bird classification.
    Left: Rates for rejection of robustness (orange) and passage of the audit (blue).
    Right: boxplots of number of images annotated when robustness is rejected or the maximum budget is exhausted.}
    \label{fig:cub-birds}
\end{figure}

\end{document}